\newtheorem{theorem}{Theorem}[section]
\newtheorem{example}[theorem]{Example}
\newcommand{\U}{{\mathcal U}}
\newcommand{\V}{{\mathcal V}}
\newcommand{\R}{{\mathcal R}}
\newcommand{\G}{{\mathcal G}}
\newcommand{\F}{{\mathcal F}}
\newcommand{\M}{{\mathcal M}}
\newcommand{\commentout}[1]{}
\newtheorem{proposition}[theorem]{Proposition} %added by Meir
\newtheorem{definition}[theorem]{Definition} %added by Meir
\renewcommand{\citeyear}{\shortcite}
\newcommand{\wbox}{\mbox{$\sqcap$\llap{$\sqcup$}}}
\begin{document}
% The file aaai.sty is the style file for AAAI Press 
% proceedings, working notes, and technical reports.
%
\title{Combining the Causal Judgments of Experts with Possibly Different Focus Areas}
\author{Meir Friedenberg \\
	Department of Computer Science\\
	Cornell University\\
	meir@cs.cornell.edu 
	\And 
	Joseph Y. Halpern\\
	Department of Computer Science\\
	Cornell University\\
	halpern@cs.cornell.edu\\
}
\maketitle
\begin{abstract}
%joe1
%In many real world settings a decision-maker must combine
%information provided to them by different experts in order to decide
In many real-world settings, a decision-maker must combine
information provided by different experts in order to decide
on an effective policy.  Alrajeh, Chockler, and Halpern
%joe1: added citation
\citeyear{ACH18}
showed how
%meir4: present tense, per review
%to combine causal models that were compatible in the sense that, for variables
%that appeared in both models, the experts agreed on the causal structure.
to combine causal models that are compatible in the sense that, for variables
that appear in both models, the experts agree on the causal structure.
In this work we show how causal models can be combined in
cases where the experts might disagree on the causal structure for variables
that appear in both models due to having different focus areas.
We provide a new formal definition of compatibility of
models in this setting and show how compatible models can be
combined.   We also consider the complexity of determining whether models
are compatible.  We believe that the notions defined in this work are of
direct relevance to many practical decision making scenarios that
come up in natural, social, and medical science settings.
\end{abstract}

\section{Introduction}
%joe1
%In many real world settings a decision-maker must combine information
%provided to them by different experts in order to decide on an
In many real-world settings, a decision-maker must combine information
provided by different experts in order to decide on an
effective policy.  For example, when deciding policing and criminal
%joe1
%justice policy it may be necessary to consult different experts
   justice policy, it may be necessary to consult different experts
specializing in areas such as criminology, psychology, sociology, and
economics.  Intelligently combining the information provided by the
various experts is necessary if the decision-maker hopes to select the
best course of action. 

Much work has been done on combining simple probabilistic judgments of
different experts.
%joe3: rewrote to avoid redundancy in the next sentence and the final
%one in this paragraph
%However, settings where the information provided
%by the experts has greater structure have not been explored in the
%by the experts has greater structure have not been explored much in the
%literature.  In particular, because
However, 
we are interested in settings
%joe3
%where one is choosing an action to try to induce certain outcomes, we
where a decision-maker wants  to choose an action in order to induce
a particular outcome, so we 
are interested in the setting where experts provide models of
the \textit{causal} relationships between different factors.  Despite
%joe3
%the clear importance of combining causal models in real world
%situations, to the extent of our knowledge there has been very little
%work on how to do so. 
the clear importance of combining causal models in real-world
situations, there has been very little
work on how to combine models with this extra structure.

Much work has been done on the related problem of learning causal
models: given data and possibly some prior structured knowledge,
extract the causal model that best fits the given information
%joe3: added references
(see, e.g., \cite{CH10,CH12,HEJ14,TS11,TT15}; Triantafillou and
Tsamardinos \shortcite{TT15} provide a good overview of work in the
area).
In a
certain sense, if all of the experts we consult have learned their
%joe1
%models in this manner and can provide us with all of their data then
models in this manner and can provide us with all of their data, then
the best thing to do is simply to learn a new causal model from the
%joe3: sounds a bit better to my ear
%union of all the data.  In many real world settings, though, this is
union of all the data.  However, in many real world settings, this is
%joe1
%completely impractical -- experts develop intuitions based on years
%worth of data, training, and discussions, and providing all of this
%background information to the decision maker may be completely impractical. 
completely impractical.   Experts develop intuitions based on years
worth of data, training, and discussions; providing all of this
background information to the decision maker may be infeasible.

On the topic of combining causal models without data, Bradley,
%joe1
%Dietrich, and List (2014) (\textbf{remember to insert the actual
%citation}) proved an impossibility result.  Given a set of desiderata
Dietrich, and List \citeyear{BDL14}
proved an impossibility result.  Given a set of desiderata
%joe3
%for combining causal models, they show that their is no algorithm that
%meets all of those conditions.  They further examine ways of
for combining causal models, they show that no algorithm 
satisfies them all.  They further examine ways of
circumventing the impossibility result by weakening some of those
conditions. 

It is perhaps not too surprising in retrospect that it will sometimes
be impossible to combine causal models, as two models can explicitly
disagree on every causal relationship.  In the work most related to
%joe1: you switch from past tense to present tense.  Made it present throughout
%this, Alrajeh, Chockler, and Halpern (2018) (\textbf{remember to
%insert the actual citation}) provided conditions for
%the \textit{compatibility} of models and showed how to combine models
this, Alrajeh, Chockler, and Halpern \citeyear{ACH18} (ACH from now on)
provide conditions for
the \textit{compatibility} of models and show how to combine models
that meet their compatibility conditions.  They define a dominance
%joe1
%relation where a model $M_1$ dominates model $M_2$ with respect to a
relation according to which a model $M_1$ dominates model $M_2$ with
respect to a 
%joe3
%variable $C$ if the two models agree in fundamental ways about the
variable $C$ if the two models agree on the
causal dependence of $C$ on the other variables shared by the two
models, but model $M_1$ has perhaps a more detailed picture of the
%joe1
%exact way the effects are mediated.  Two models are compatible if, for
exact way that the effects are mediated.  Two models are compatible if, for
%joe3: ``that can be affected'' is a bit fuzzy.  Best to cut it for
%the intro
%every variable that can be affected in either model, one of the models
every variable, one of the models
dominates the other; the combined model takes the causal information
%joe1
%from the dominant model for each variable $C$.  They also provide a
from the dominant model for each variable $C$.  ACH also provide a
way of assigning probabilities to causal models in settings where not
all models under consideration are compatible. 

%joe1
%The present work can be seen as a complementary approach to that found
%in the paper by Alrajeh, Chockler, and Halpern (2018)
%(\textbf{remember to insert the actual citation}).  Philosophically,
%the approach presented in their paper is intended to allow for
The present work can be seen as providing an approach complementary to
that of ACH.  Philosophically,
the approach presented by ACH is intended to allow for
combination of models where the modelers fundamentally agree on the
causal relationship between the variables they both
%joe1
%discuss but include different amounts of detail as to how some of those
discuss, but go into different levels of detail as to how some of those
relationships are mediated.  But consider, for instance, the following
%joe1: there's a verb missing here.  I used what I though you intended
%scenario: a medical scientist under what conditions a particular
%meir3: this restarted in the middle.  Fixing it to what I think you intended
%scenario: a medical scientist is interested in the conditions under
%which a particular
%reaction occurs.  She consults with two experts in the area.  The
%first of these experts specializes in the exact mechanism by which
%reaction occurs, and consults with two experts.  The
%first specializes in the exact mechanism by which
scenario: a medical scientist is interested in the conditions under
which a particular reaction occurs, and consults with two experts.  The
first specializes in the exact mechanism by which
%joe1: lots of minor changes
%this reaction occurs.  The other experts specializes not in this
%reaction per se but in how one of the reactants gets produced.
%Because of their different areas of focus they in fact don't agree on
%everything; each one is more aware of the details of the reaction they
%study, and as such have more understanding of what factors can cause
%the reaction to occur differently.  Our intuition tells us that there
this reaction occurs; the second specializes 
in how one of the reactants gets produced.
Because of their different focus areas, they in fact do not agree on
everything; each of them is more aware of the details of the reaction that she
studies, and thus has more understanding of what factors can cause
that reaction to occur differently.  Our intuition tells us that there
should be a way of combining these models to get the true expertise of
%joe1
%both modelers, but under the approach in Alrajeh, Chockler, and
%Halpern (2018) (\textbf{remember to insert the actual citation}) these
both modelers, but with the ACH approach, these
models would in fact have to be deemed incompatible. 

%joe1
%In this work we allow for combining models where the modelers
%in fact  disagree due to different area of focus.  Intuitively, if the first
%modeler considered more possibilities than the other and her
%conclusion can \textit{explain} the observations of the second then
%joe1
In this work, we allow for combining models where the modelers
disagree due to their different focus areas.  Intuitively, if the first
modeler considered more possibilities than the second, and her
 conclusion can \textit{explain} the observations of the second, then
 we accept the conclusion of the first modeler.  To this end, we
 define a new notion of a  \textit{``can explain''} relation and
 provide new formalizations of compatibility and model combination
 relative to this notion.   

%joe1: let's save this for the conclusion
%We believe that the notions developed in this work are of direct
%relevance to many practical decision making scenarios that come up in
%natural, social, and medical science settings.   

%meir1: Added the section labels for all sections.
%joe1: it's better practice to label sections and refer to them by
%their labels.  I did this the first time
%The rest of this paper is organized as follows.  In Section $2$ we
%define the causal models under consideration in this work and
%interventions in these models.  In Section $3$ we define our approach
The rest of this paper is organized as follows.  In Section~\ref{sec:focus}, we
%joe4: the definitions are standard
%meir4: I'm switching this back for the time being, because we have the
%addition of the focus function and I think it's important to highlight
%that difference from the standard model.  Perhaps we should discuss.
%joe5: a compromise; let me know if it's OK with you
%meir5: sounds great.
%define the causal models under consideration in this work.
review the basic framework of causal models, and extend them so as to
accommodate focus areas.
In Section~\ref{sec:combine}, we define our approach
to combining these models. Section~\ref{sec:weight} contains an approach to
weighting models in settings where the models under consideration are
%joe1
%not all compatible.  In Section $5$ we conclude and suggest future
not all compatible.
%joe3
%In Section~\ref{sec:conclusion}, we conclude and suggest future work. 
We characterize the computational complexity of the can-explain
relation that we define in Section~\ref{sec:complexity}.
Section~\ref{sec:conclusion} concludes.

\section{Causal Models with Focus}
\label{sec:focus}

%joe4
%meir4: switching this back for the time being, for the same 
%reason as above.  Happy to discuss.
%joe5: trying to find a middle ground
%meir5: also great
%In this section we define the causal models under consideration in
%this work.  The general approach largely follows that set out by
In this section, we review the framework of causal models.  We largely follow
%joe1
%Halpern and Pearl (2005 \textbf{remember to insert the actual
%citation}, but with modifications as fit for our setting. 
%joe5
%meir6
%Halpern and Pearl \citeyear{HPearl01b},
Halpern and Pearl \citeyear{HP01b},
but extend their basic framework so as to allow the models to express
focus areas.

%joe1
%The general approach is that there are variables with
We assume that a situation is characterized by the values of a number
of variables.  There are
\textit{structural equations} describing the effect that the
variables have on each other.  Among the variables, we distinguish
between \textit{exogenous variables} (whose values are determined by
factors outside of the model) and \textit{endogenous variables} (whose
values are determined by other variables in the model). 

%joe1
%A \textit{Causal Model with Focus} is defined to be a tuple
%$(\mathcal{S},\mathcal{F},\mathcal{G})$ where $\mathcal{S}$ is
A \textit{causal model with focus} is a tuple $M =
(\mathcal{S},\mathcal{F},\mathcal{G})$, where $\mathcal{S}$ is
a \textit{signature},
%joe2: the next line is true, but not your notation is nonstandard
$\mathcal{F}$ is a set of \textit{structural equations},
and $\mathcal{G}$ is a \textit{focus function}.  The
signature $\mathcal{S}$ is itself a tuple
$(\mathcal{U},\mathcal{V},\mathcal{R})$.  Here $\mathcal{U}$ is a
(finite but non-empty) set of exogenous variables and $\mathcal{V}$ is
a (finite but non-empty) set of endogenous variables.  $\mathcal{R}$
is a \textit{range} function mapping elements of
$\mathcal{U} \cup \mathcal{V}$ to the (finite but non-empty) set of
values they can take on.
%joe1
%In this paper we consider only cases where $|\mathcal{R}(C)>1|$ for
%all variables $C$; if a variable can only take on one value 
We assume without loss of generality that $|\mathcal{R}(C)|>1$ for all
variables $C$.  (If a variable can take on only one value,  
then it can neither be a cause nor have its value be
caused by another variable, so we can remove it and get a semantically
equivalent model.)
%joe2: this is nonstandard; I made it standard
%$\mathcal{F}$ is a function mapping each endogenous variable
%$C \in \mathcal{V}$ to a function $\mathcal{F}(C):\times_{A\in
%(\mathcal{U} \cup \mathcal{V}-{C})}\mathcal{R}(A) \rightarrow \mathcal{R}(C)$
%that assigns a value to $C$ based on the value of all the other
%variables in the model.
$\F$ associates with each endogenous variable $X \in \V$ a
function denoted $F_X$ such that $F_X: (\times_{U \in \U} \R(U))
\times (\times_{Y \in \V - \{X\}} \R(Y)) \rightarrow \R(X)$;
that is, $F_X$ determines the value of $X$,
given the values of all the other variables in $\U \cup \V$.
For example, we might have 
$F_X(u,y,z) = u+y$, which is usually written as
$X = U+Y$.   Thus, if $Y = 3$ and $U = 2$, then
$X=5$, regardless of how $Z$ is set.

%meir1 fixed typo
%joe1
%Until this point we have essentially described the models proposed by
%Halpern and Pearl (2005) \textbf{remember to add the actual citation}.
Up to now, we have essentially described the
%meir6
%Halpern-Pearl \citeyear{HPearl01b} model.  
Halpern-Pearl \citeyear{HP01b} model.  
In our setting, though, we add an additional \textit{focus function}
that intuitively tells us what variables the modeler considered when
trying to determine the structural equation for each variable.  In
%joe1
%practice, one might extract such information from the modeler themself
practice, we might extract such information from the modeler herself
or from the published experiments of the modeler.  Formally, we let
$\mathcal{G}:(\mathcal{U} \cup \mathcal{V}) \rightarrow
2^{(\mathcal{U} \cup \mathcal{V})}$ be a function that, given a
%joe1
%variable $C$, gives us the set of variables the modeler considered as
variable $C$, gives us the set of variables that the modeler considered as
possibly having an effect on $C$.  
%meir1
We require for all $C$ that $C \notin \mathcal{G}(C)$, as a variable
cannot have a causal effect on itself.  
It may seem surprising at first
that we define this function even for exogenous variables, which are
%joe1
%not affected by other variables in the model; we think it is more
%natural to do so, as it is possible the modeler considered the
%possibility of it being endogenous before deciding that it wasn't
not affected by other variables in the model.  We think that it is more
natural to do so, as it is possible that the modeler considered the
possibility of the variable being endogenous before deciding that it wasn't
affected by any other variables in the model. 

%joe1: reordered sentences
%We define $B$ to be a parent of $C$ if
We define $B$ to be a \emph{parent} of $C$ if
there exists some setting of all the variables in
$(\mathcal{V} \cup \mathcal{U}) - \{B,C\}$ such that $C$ takes on some
value $c_1$ for a value $b_1$ of $B$ and takes on a different value
$c_2$ for some other values $b_2$ of $B$.  Let $Par(C)$ be the set of
parents of $C$.
%meir1: added missing word 'effect'
%joe1: added
%Thus, the parents of $C$ are exactly those variables that might have a
%direct on $C$. 
Thus, the parents of $C$ are exactly those variables that might have a
direct effect on $C$. 
We require that $Par(C) \subseteq \mathcal{G}(C)$ for
every variable $C$.    
%joe1
%It is worth noting one restriction on the focus function: one cannot
%decide that variable $B$ has an effect on variable $C$ if she did not
%consider it as a possibility.
A modeler cannot have an equation for $C$ showing that $B$ has a
direct influence on $C$ unless the modeler considered $B$ as a
possibly having an effect on $C$.

%meir1
%joe1
%These models can be represented by two graphs on the same set of
%vertices.  We let the vertices be the elements of
%$\mathcal{U} \cup \mathcal{V}$.  In the first graph,
%A causal model with focus with exogenous variables $\U$ and
 %endogenous variables $\U$ can be represented by a pair of graphs on
 %$\mathcal{U} \cup \mathcal{V}$.  In the first graph,
 A causal model with focus with exogenous variables $\U$ and
endogenous variables $\V$ can be represented by a pair of graphs on
$\mathcal{U} \cup \mathcal{V}$.  In the first graph,
%joe2
called the \emph{parent graph},
%meir1
%joe1
%$G=(\mathcal{U} \cup \mathcal{V}, E)$, we let the edges $E$ be from
%the elements of $Par(C)$ to $C$ for all endogenous variables $C$.
%the edge set $E$ consists of edges from the vertices in $Par(C)$ to
the edge set $E$ consists of edges from the vertices in $Par(C)$ to
$C$, for each 
endogenous variable $C$.
In the second graph,
%joe2
called the \emph{focus graph},
%joe1
%$G' = (\mathcal{U} \cup \mathcal{V}, E')$, we let
%the set $E'$ consist of edges from the elements of $\mathcal{G}(C)$ to
%$C$ for all variables $C$.
the edge set $E'$ consists of edges to each vertex $C$ from
the members of $\mathcal{G}(C)$.
Pictorially, we can depict this
representation with directed edges for the elements of $E$ and
crossed-out directed edges for the elements of $E'-E$. We call a
model \textit{recursive} or \textit{acyclic} if the
%joe1
%graph $G$ does not
%joe2
%the first graph does not
%meir3: deleted double 'the'
%the parent graph does not
parent graph does not
contain any cycles.  In cases where the model is acyclic, given a
%joe1
%context $\vec{u}$ (i.e.\ a setting of the exogenous variables) the
context $\vec{u}$ (i.e., a setting of the exogenous variables), the
values of all the endogenous variables are uniquely determined by the
%joe1
%structural equations.  Following the common decision in the
structural equations.  As is standard in the
literature, we restrict our discussion to acyclic models in this
work. 

%joe1: it seems strange (to me) to interrupt the definition of |= with
%a definition of intervention}.  Reordered
\commentout{
Given a signature $\mathcal{S} =
(\mathcal{U},\mathcal{V},\mathcal{R})$, a primitive event is a formula
of the form $C=c$ where $C \in \mathcal{U}\cup{\mathcal{V}}$ and
$c \in \mathcal{R}(C)$.  A boolean combination $\varphi$ of primitive
events is true or false relative to a \textit{(causal) setting}, a
pair $(M,\vec{u})$ consisting of a model and a context.  We write
$(M,\vec{u}) \vDash \varphi$ if $\varphi$ is true in the setting
$(M,\vec{u})$.  $\vDash$ is formally defined inductively, with a
primitive event $C=c$ being true if $C$ is assigned the value $c$ in
the (unique, because the models under consideration are acyclic)
solution to the structural equations under context $\vec{u}$. 

\subsection{Interventions}
}

%joe1
%Given a model $M$, a variable $X \in \mathcal{V}$, and a value
%$x \in \mathcal{R}(X)$, we define $M_{X \leftarrow x}$ to be a model
%that is the same as $M$ except with $\mathcal{F}(X)$ replaced by
%X=x$.  Similarly, for a vector $\vec{X}$ of variables and $\vec{x}$
%of associated values, we let $M_{\vec{X} \leftarrow \vec{x}}$ be
%model $M$ with the structural equation for each variable in $\vec{X}$ set to
%the associated value in $\vec{x}$.  (The effects of these values will
%then be propagated through the model by the other structural equations.)
%meir5 
%Given a model $M$, an endogenous variable $X \in \mathcal{V}$ 
Given a model $M$, an endogenous variable $X \in \mathcal{V}$, 
and a value $x \in \mathcal{R}(X)$,
we define $M_{X \leftarrow x}$ to be the model 
%joe2
%that is the same as $M$ except that the equation $\mathcal{F}(X)$ for
that is the same as $M$ except that the equation for
$X$  is replaced by $X=x$.  We can think of
%meir1: you might disagree with this stylistic change
%either way, 'desribingthe' should be separated
%the model $M_{X \leftarrow x}$ as describingthe effect on
%$M$ of intervening to set $X$ to $x$.
the model $M_{X \leftarrow x}$ as describing the result of 
intervening to set $X$ to $x$ in model $M$.

%joe1: shortened
%For any model $M$, vector of variables $\vec{X}$, and values $\vec{x}$
%such that  $\vec{X}_i \in \mathcal{V}$ and
%$\vec{x}_i \in \mathcal{R}(\vec{X}_i)$ for all $i$, we call
%$\vec{X}=\vec{x}$ an \textit{intervention} in  $\mathcal{M}$.
%$[\vec{X} \leftarrow \vec{x}]\varphi$ is a \textit{causal formula},
%where $\varphi$ is again a boolean combination of primitive events.
%We write that $(M,\vec{u}) \vDash
%[\vec{X} \leftarrow \vec{x}] \varphi$ iff
%$(M_{\vec{X} \leftarrow \vec{x}},\vec{u}) \vDash \varphi$.   
%
%joe1: I don't think we need this
%Any intervention $\vec{X}=\vec{x}$ is said to be an intervention on
%$\vec{X}$.  In a setting $(M,\vec{u})$, intervention
%$\vec{X}=\vec{x}$ is said to affect $\varphi$ if
%$(M,\vec{u}) \vDash \varphi$ and $(M,\vec{u}) \vDash
%[\vec{X} \leftarrow \vec{x}] \neg\varphi$.
%
%joe1*: I didn't like this because (M_{U <- u},u) is technically
%meaningless, because u is no longer the value of an exogenous
%variable in M_{U <- u}.
%For our purposes it will be useful to extend this definition slightly
%to allow $\vec{X}$ to contain exogenous variables.  This can be done
%simply: if $X$ is an exogenous variable in $M$ then let
%$M_{X \leftarrow x}$ be a model that is the same as $M$ except that
%$X$ is now endogenous with $\mathcal{F}(x)=x$.  Everything else then
%stays the same. 

%joe1: replacement material
Take a \emph{causal formula} to be one of the form $[Y_1 \gets
y_1, \ldots,  Y_k \gets y_k]\varphi$, where $Y_j \in \U \cup \V$ and
$\varphi$ is a Boolean combination of \emph{primitive formulas} of the
form $X=x$, where $X$ is an endogenous variable and $x \in \R(X)$.%
\footnote{In previous work, each $Y_i$ is taken to be an endogenous
%meir1
%cvariable.  For our purposes, it is useful to also allow $Y$ to be
variable.  For our purposes, it is useful to also allow $Y$ to be
exogenous.}  In the special case where $k=0$, we identify $[\,]\varphi$
with the formula $\varphi$.

We now define what it means for a causal formula $\varphi$ to be true in
a \emph{causal setting} $(M,\vec{u})$ consisting of a causal 
model $M$ and a context $\vec{u}$, written
$(M,\vec{u}) \models \varphi$,  by induction on the structure of $\varphi$.
%meir1
%For a primitive event $X=n$, $(M,\vec{u}) \models X=x$ if $X=x$ 
For a primitive event $X=x$, $(M,\vec{u}) \models X=x$ if $X=x$ 
%meir3: technically I'm not sure it's the unique solution to the
%equations, rather $x$ is the value of $X$ in the unique solution
%as the solution incldues the values of other variables
in the unique solution to the equations in $M$ given context $\vec{u}$
(the solution is unique since we are dealing with acyclic models, so
the setting of the exogenous variables determines all other
variables).  The truth of a Boolean combination of primitive events is
defined in the obvious way.  If $k \ge 1$ and $Y_k$ is an endogenous
variable, then
$$\begin{array}{ll}
(M,\vec{u}) \models [Y_1 \gets y_1, \ldots, Y_k \gets
y_k]\varphi \mbox{ iff } \\
%meir1: fixed typo
%(M_{Y_k gets y_k},\vec{u}) \models
(M_{Y_k \gets y_k},\vec{u}) \models
[Y_1 \gets y_1, \ldots, Y_{k-1} \gets y_{k-1}]\varphi.\end{array}$$
%joe3
%if $Y_k$ is an exogenous variable, then
If $Y_k$ is an exogenous variable, then
$$\begin{array}{ll}
(M,\vec{u}) \models [Y_1 \gets y_1, \ldots, Y_k \gets
y_k]\varphi \mbox{ iff }\\ (M,\vec{u}[Y_k/y_k]) \models
[Y_1 \gets y_1, \ldots, Y_{k-1} \gets y_{k-1}]\varphi,\end{array}$$
where $\vec{u}[Y_k/y_k]$ is the result of replacing the value of $Y_k$
in $\vec{u}$ by $y_k$.

%joe2*: moved the first part of the example here
%joe3
%We now show to model the example from the introduction in this framework.
We now show how to model the example from the introduction in this framework.
\begin{example}\label{ex1}
{\rm 
Recall the basic scenario: a medical scientist is trying to
 understand under what 
%joe3: shortened a bit
%conditions a particular reaction occurs.  She consults with two
%experts in the area.  The first of these experts specializes in the
conditions a particular reaction occurs, and consults with two
experts.  The first specializes in the
%joe2
%exact mechanism by which this reaction occurs.  The other expert
%specializes not in this reaction per se but in how one of the
%reactants gets produced.
exact mechanism by which this reaction occurs; the second
in how one of the reactants gets produced.
%joe3
%The original scientist now wants to know how
The scientist then wants to 
%meir5: removing extra "to"
%to combine the information provided by the two experts. 
combine the information provided by the two experts. 
%joe2
%Formally, the first expert provides model $M_1$ and the second
%provides model $M_2$.  The two models provided are depicted
%in \ref{fig:incompatibleachexample1}.The main difference between
%these two models is that modeler $1$ takes into account the effect
%Suppose that the first expert provides model $M_1$ and the second
%provides model $M_2$; the models  are depicted
The models provided by the experts are depicted 
%joe3: wrong ref
%in Figure~\ref{fig:incompatibleachexample1}, where expert $i$ provides
in Figure~\ref{fig:twoscientistsexample1}, where expert $i$ provides
model $M_i$.
The main difference between
these models is that expert $1$ takes into account the effect
%joe2
%that temperature $T$ can have on reaction $C$ while modeler $2$, who
that temperature $T$ can have on reaction $C$, while modeler $2$, who
does not, takes into account the effect temperature can have on the
production of reactant $B$.  Formally, the parameters of these two
%joe2
%models are defined as follows: for the ranges, we let
models are defined as follows: for the ranges, we have
$\mathcal{R}_{1}(T) = \mathcal{R}_{2}(T)
= \{\mathsf{Freezing},\mathsf{Cool},\mathsf{Hot}\}$,
$\mathcal{R}_{1}(A')=\mathcal{R}_{1}(B')=\mathcal{R}_{2}(A')=\mathcal{R}_{2}(B')
= \{1,2,3\}$,
$\mathcal{R}_{1}(A)=\mathcal{R}_{1}(B)=\mathcal{R}_{2}(A)=\mathcal{R}_{2}(B)
= \{1,\dots,5\}$, and $\mathcal{R}_{1}(C)=\mathcal{R}_{2}(C)
%joe2: for consistency with later example
%= \{\mathsf{True},\mathsf{False}\}$.  The structural equations
= \{\mathsf{true},\mathsf{false}\}$.
%joe3
We have $\G_1(A) = \{A'\}$, $\G_1(B) = \{B'\}$, and $\G_1(C) = \{A,B,T\}$, while
 $\G_1(A) = \{A'\}$, $\G_1(B) = \{B',T\}$, and $\G_1(C) = \{A,B\}$.
This is how we model the fact that expert 1 does not take into account
 the effect that temperature ($T$) can have on $B$, while expert 2 does
 not take into account the effect that temperature can have on $C$.
The structural equations 
%joe2
%for model $1$ are defined as $\mathcal{F}_{1}(A) = A'$,
%$\mathcal{F}_{1}(B) = B'$, and $\mathcal{F}_{1}(C)$ is defined by the
%logical formula $((T = \mathsf{Freezing}) \wedge (A+B \geq 9)) \vee
%((T=\mathsf{Hot}) \wedge (A+B \geq 4))$.  For model $2$ the structural
%equations are $\mathcal{F}_{2}(A) = A'$, $\mathcal{F}_{2}(B)$ is
%defined as $B' + 2$ if $T = \mathsf{Freezing}$ and $B'$ otherwise, and
%$\mathcal{F}_{2}(C)$ is defined by the logical formula $A+B \geq 5$. 
in $M_1$ are defined by taking $A = A'$, 
$B = B'$, and $C = 
((T = \mathsf{Freezing}) \wedge (A+B \geq 9)) \vee
((T = \mathsf{Cool}) \wedge (A+B \geq 5)) \vee
%joe3
%((T=\mathsf{Hot}) \wedge (A+B \geq 4))$.  In $M_2$ the structural
((T=\mathsf{Hot}) \wedge (A+B \geq 4))$.  In $M_2$, the structural
equations are $A = A'$; $B = B' + 2$ if $T = \mathsf{Freezing}$ and
$B= B'$ otherwise; and 
%joe2*: is this what you meant?
$C = \mathsf{true}$ if $A+B \geq 5$ and $C= \mathsf{false}$ otherwise.
%joe2*: Meir, we need to say more about the \G_i relations.  That's
%the key new feature in this model, so you can't just leave it out.
%Also, insert the appropriate figure here; the figure should show
%the \G_i relations (and you should explain how you're combining the
%two relations in one graph.
%joe3
%The experts' models are depicted in Figure~\ref{fig:twoscientistsexample1}.
}
%joe3*: the figures look much better now.  Technically, the variables
%(in all figure) %should all be in italic font, to be consistent with
%the text.  But this can certainly wait until later. 
\begin{figure}[htb]
%joe3: we should probably center the figure a bit better in the ppt.
%We can do this later.
\begin{center}
\includegraphics[width=1.0\linewidth]{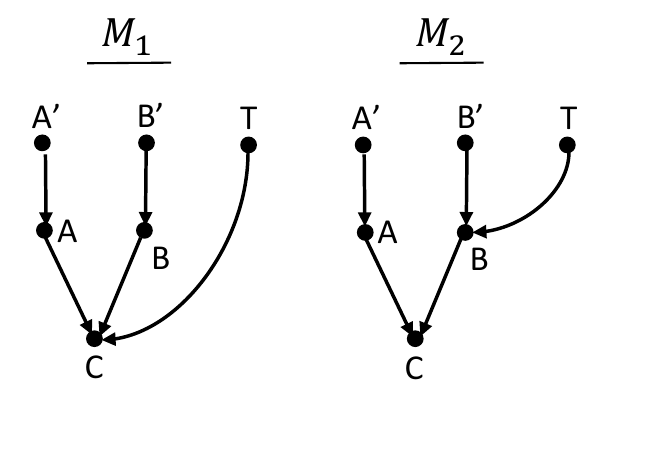}
\end{center}
%joe3
\vspace{-.5in}
\caption{$M_1$ and $M_2$ are the models of the two scientists trying
to understand reaction $C$.} 
\label{fig:twoscientistsexample1}
\end{figure}
\wbox
\end{example}

\commentout{
Given a signature $\mathcal{S} =
(\mathcal{U},\mathcal{V},\mathcal{R})$, a primitive event is a formula
of the form $C=c$ where $C \in \mathcal{U}\cup{\mathcal{V}}$ and
$c \in \mathcal{R}(C)$.  A boolean combination $\varphi$ of primitive
events is true or false relative to a \textit{(causal) setting}, a
pair $(M,\vec{u})$ consisting of a model and a context.  We write
$(M,\vec{u}) \vDash \varphi$ if $\varphi$ is true in the setting
$(M,\vec{u})$.  $\vDash$ is formally defined inductively, with a
primitive event $C=c$ being true if $C$ is assigned the value $c$ in
the (unique, because the models under consideration are acyclic)
solution to the structural equations under context $\vec{u}$. 
}

\section{Combining Causal Models with Focus}
\label{sec:combine}

In this section we turn to the question of combining causal models.
We define a new relation and show how it can be used to define
compatibility and combination. 

%joe2: lower case
%\subsection{The ``Can Explain'' Relation}
\subsection{The ``can-explain'' relation}

%joe1
%We want to capture the notion that if a specific modeler considered
%the causes of some particular thing more carefully then their analysis
%is preferable.  Intuitively, we ought to prefer modeler $1$'s
%structural equation for $C$ over modeler $2$'s if a) modeler $1$
%considered all the possible causes that modeler $2$ did and b) modeler
%$1$'s conclusion has the capacity to explain modeler $2$'s
%meir1
%We want to capture the intuition that if modeler $i$ considered
We want to capture the intuition that if modeler $i$ considered
%meir1
%the causes of some variable $C$ more carefully then modeler $j$, then
the causes of some variable $C$ more carefully than modeler $j$, then
%joe3
%$i$'s analysis is preferable.  Roughly speaking, we ought to prefer
$i$'s analysis is preferable.  Roughly speaking, we prefer
%joe2: for consistency
%modeler $i$'s structural equation for $C$ over $j$'s if a) modeler $i$
%considered all the possible causes that $j$ did and b)
%joe3
%modeler $i$'s structural equation for $C$ over $j$'s if (a) modeler $i$
modeler $i$'s structural equation for $C$ over $j$'s if 
%joe2
%considered all the possible causes that $j$ did and (b) 
%joe3
%$i$'s conclusion can explain (in some appropriate sense) $j$'s
$i$'s model can explain (in some appropriate sense) $j$'s
observations.

%joe2*: added to simplify life
Before going on, we introduce some notation conventions that will
simplify the exposition.  When we write $M_i$, we assume that the
%meir1: fixed paren, as discussed
model $M_i$ has components $((\U_i,\V_i,\R_i),\F_i,\G_i)$, and
$Par_i(C)$ refers to the parents of variable $C$ in model $M_i$.
%joe3
%Also, for a model $M$, we write $\F^M$ and so on to denote
Also, for a model $M$, we write $\F^M$ to denote
the $\F$ function in model $M$, and similarly for the other components
of the model.

%joe1
%Formally, we say that $M_1 =
\begin{definition}\label{dfn:canexplain} {\rm $M_1$
%joe2: note that I've simplified things
%= (\mathcal{S}_1,\mathcal{F}_1,\mathcal{G}_1)$ \textit{can explain} $M_2
%= (\mathcal{S}_2,\mathcal{F}_2,\mathcal{G}_2)$ with respect to $C$,
\textit{can explain} $M_2$
with respect to $C$,
written $M_1 \succeq_C M_2$, if  
%joe1: you should use latex's itemize mode for this, and let latex take care of
%indentation 
%\indent\indent a) $\mathcal{G}_2(C) \subseteq \mathcal{G}_1(C)$ and
%\indent\indent b) for all exogenous settings $\vec{u}_2$ for $M_2$ and
%meir1: this should also be commented out
%all interventions $\mathcal{G}_2(C) = \vec{x}$ on $\mathcal{G}_2(C)$,
\begin{itemize}
%joe2*: added; this turns out to be important
%\item[(a)] $\mathcal{G}_2(C) \subseteq \mathcal{G}_1(C)$ and
%\item[(b)] for all exogenous settings $\vec{u}_2$ for $M_2$ and
\item[(a)] $\R_1(C) = \R_2(C)$,
\item[(b)] $\mathcal{G}_2(C) \subseteq \mathcal{G}_1(C)$, and
\item[(c)] for all exogenous settings $\vec{u}_2$ for $M_2$ and
all interventions $\mathcal{G}_2(C) = \vec{x}$ 
%joe2
%there is a setting $\vec{u}'_1$ of the exogenous variables in $M_1$
there is a context $\vec{u}_1$ in $M_1$
such that if $(M_2, \vec{u}_2) \vDash
[\mathcal{G}_2(C) \leftarrow \vec{x}]( C=c)$ then
$(M_1, \vec{u}_1) \vDash [\mathcal{G}_2(C) \leftarrow \vec{x}] (C=c)$. 
%joe1
\end{itemize}
}
\end{definition}

This relation ought really be called \textit{can explain and has
considered everything considered by}, but for the sake of brevity we
use simply \textit{can explain}. 

%joe2*: added intuition
The intuition here is that expert 2 (whose knowledge is characterized
by $M_2$) has considered carefully the effect on $C$ of all the variables in
$\G_2(C)$ and has observed that those in $Par_2(C)$ have an effect on
%meir5
%$C$, while those in $\G_2(C) - Par_2(C)$ do not.  He has not bothered
$C$, while those in $\G_2(C) - Par_2(C)$ do not.  She has not bothered
considering the effect of the variables not in $\G_2(C)$ on $C$,
%meir5
%because he is reasonably sure that they have no effect (but could turn
because she is reasonably sure that they have no effect (but could turn
out to be wrong about this).  Expert 1 (whose knowledge is
characterized by $M_1$) can explain expert 2's observations (at least,
%meir3
%with regard to $C$) if he has also considered at least all the
%meir5
%with regard to $C$) if he has also considered at least all of the
with regard to $C$) if she has also considered at least all of the
%joe3
%interventions that expert 2 has considered, and explain all of expert
%meir3
%interventions that expert 2 has considered, and explain can all of expert
interventions that expert 2 has considered, and can explain all of expert
2's observations in the sense of condition (c) of
Definition~\ref{dfn:canexplain}.
%joe3*: cut for now, but we should reinstate this (with an example) in
%the full paper
%There may be such an explantaion even if,
%according to expert 2, some of the variables in $\G_2(C) - Par_2(C)$ do in
%fact affect $C$ and some of the variables in $Par_2(C)$ do not.
%The following example illustrates this point.

%joe2*: Meir, can you add an example here  One of the examples you
%already have should work here.

%joe1: added
We conclude this subsection with two technical results that highlight
useful properties of the $\succeq_C$ relation.
Say that $M_1$ and $M_2$ are \emph{$C$-compatible} if either
$M_1 \succeq_C M_2$ or $M_2 \succeq M_1$.  We now show that
$\succeq_C$ is transitive when restricted to $C$-compatible models.
%joe1
%\begin{claim}
\begin{proposition}\label{pro:preorder}
%joe1*: this is a bit problematic.  We usually talk about preorders on
%sets.  Assuming you define ``pairwise compatible'' in the obvious way
%(as I did above), I'm not sure that there is a *set* of pairwise
%compatible models. Rather, there is a set of pairs.  But you're not
%defining the relation on the pairs. I wrote what you actually proved
%The $\succeq_C$ relation is a preorder on pairwise compatible models.
%\end{claim}
%meir1: I think this should be M_1 and M_3 C-compatible
%If $M_1 \succeq_C M_2$, $M_2 \succeq_C M_3$, and $M_1$ and $M_2$ are
If $M_1 \succeq_C M_2$, $M_2 \succeq_C M_3$, and $M_1$ and $M_3$ are
$C$-compatible, then $M_1 \succeq_C M_3$.  
\end{proposition}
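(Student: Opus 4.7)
The plan is to verify the three conditions of Definition~\ref{dfn:canexplain} for $M_1 \succeq_C M_3$. Conditions (a) and (b) fall out immediately by composition: from the two hypotheses $\R_1(C) = \R_2(C) = \R_3(C)$ and $\G_3(C) \subseteq \G_2(C) \subseteq \G_1(C)$.

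The content is in condition (c). Fix an arbitrary $\vec{u}_3$ and intervention $[\G_3(C) \leftarrow \vec{x}]$ with $(M_3, \vec{u}_3) \vDash [\G_3(C) \leftarrow \vec{x}](C = c)$; I need to produce $\vec{u}_1$ with $(M_1, \vec{u}_1) \vDash [\G_3(C) \leftarrow \vec{x}](C = c)$. The natural route is a two-step chain: first apply $M_2 \succeq_C M_3$ to obtain $\vec{u}_2$ with $(M_2, \vec{u}_2) \vDash [\G_3(C) \leftarrow \vec{x}](C = c)$, then apply $M_1 \succeq_C M_2$ to pass to $M_1$. The sticking point, which I expect to be the main obstacle, is that $M_1 \succeq_C M_2$ translates interventions on $\G_2(C)$, not on $\G_3(C)$, and in general $\G_3(C)$ may be strictly contained in $\G_2(C)$.

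My plan for overcoming this is to exploit the $C$-compatibility of $M_1$ and $M_3$ via a case split. If $M_1 \succeq_C M_3$ holds, then condition (c) for that relation supplies $\vec{u}_1$ directly. Otherwise, $C$-compatibility forces $M_3 \succeq_C M_1$; its condition (b) then gives $\G_1(C) \subseteq \G_3(C)$, and combining this with $\G_3(C) \subseteq \G_2(C) \subseteq \G_1(C)$ collapses the chain of inclusions into the equalities $\G_1(C) = \G_2(C) = \G_3(C)$. Under this coincidence of focus sets, $[\G_3(C) \leftarrow \vec{x}]$ is literally an intervention on $\G_2(C)$, so applying $M_1 \succeq_C M_2$ after $M_2 \succeq_C M_3$ produces the desired $\vec{u}_1$ with no asymmetry to reconcile.

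The key conceptual point I would highlight is that $C$-compatibility plays a dual role in this argument: either it directly hands us the conclusion, or it collapses the three focus sets enough that the naive chaining goes through unobstructed. This is precisely why $C$-compatibility is the right hypothesis needed to recover transitivity of $\succeq_C$; without it one could have a situation where the intermediate model's focus set is genuinely larger than both endpoints', and no amount of manipulation would let us transfer the intervention back to the smaller focus set.
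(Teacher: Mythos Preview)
Your proposal is correct and follows essentially the same approach as the paper: both arguments use $C$-compatibility to split into the trivial case $M_1 \succeq_C M_3$ and the case $M_3 \succeq_C M_1$, in which the chain of inclusions collapses to $\G_1(C) = \G_2(C) = \G_3(C)$ and the two-step chaining through $M_2$ goes through. The only cosmetic difference is that the paper frames this as a proof by contradiction (assuming $M_1 \not\succeq_C M_3$) rather than an explicit case split, and you verify conditions (a) and (b) up front by transitivity rather than deriving them inside the contradiction branch.
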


\begin{proof}
%joe1
%\textbf{Reflexivity:} Given a model $M_1$, it is obvious that
%$\mathcal{G}_1 \subseteq \mathcal{G}_1$, and the second condition can
%be satisfied by setting $u'_1 = u_2$. 
%	
%	\textbf{Transitivity:}
Assume that $M_1 \succeq_C M_2$,
%joe1
%$M_2 \succeq_C M_3$, and by way of contradiction that
$M_2 \succeq_C M_3$, and by way of contradiction, that
	$M_1 \not\succeq_C M_3$.
%joe1
%Because the models are pairwise compatible it then must be the case
Because $M_1$ and $M_3$ are $C$-compatible, it must then be the case
	that $M_3 \succeq_C M_1$. 
%joe1
%By the first part of the definition of can explain we get that
%joe2
%By the first part of the definition of can explain, we get that
%joe3:
%By the first part of the definition of the can-explain relation, we get that
Since $\R_1(C) = \R_2(C)$ and $\R_2(C) = \R_3(C)$, we have that
$\R_1(C) = \R_3(C)$.  And since 
	$\mathcal{G}_2(C) \subseteq \mathcal{G}_1(C)$,
	$\mathcal{G}_3(C) \subseteq \mathcal{G}_2(C)$, and
%joe3
%$\mathcal{G}_1(C) \subseteq \mathcal{G}_3(C)$, so
$\mathcal{G}_1(C) \subseteq \mathcal{G}_3(C)$, 
%joe1
%these three 	collapse and
        we get that $\mathcal{G}_1(C) = \mathcal{G}_2(C)
	= \mathcal{G}_3(C)$.  Now consider any intervention
%joe1
%$\vec{X}=\vec{x}$ on $\mathcal{G}_3(C)$; because
%meir1*: I think you took out the definition of intervention,
%but I think we should still probably try to be consistent as to how 
%we write it.  In Definition 3.1 we're using  X=x for interventions, 
%whereas here you're using X \gets x.  I think it will probably be easier 
%to stick with X=x, as that's how I had tried to write it throughout.
%joe3
%$\G_3(C) \gets \vec{x}$.  Because $M_2 \succeq_C M_3$,
$\G_3(C) = \vec{x}$.  
%meir4: Reviewer 2 pointed out that this is a run-on sentence.
%There should probably be some sort of division before "that is",
%but I'm not quite certain how to make it work.
Because $M_2 \succeq_C M_3$,
we know that any value of $C$ that can be
	achieved in $M_3$ under intervention $\vec{X}=\vec{x}$ can
%joe5: the reviewer is right; we omitted a semicolon which should have
%been there
%also be achieved in $M_2$ under the same intervention
also be achieved in $M_2$ under the same intervention;
%joe1*: we need to spell this out at least once
%joe5: typo
%that is, for all context $\vec{u}$ and values $c \in \R(C)$, if
that is, for all contexts $\vec{u}$ and values $c \in \R(C)$, if
%meir3: I think these were flipped...
%$(M_2,\vec{u}) \models [\G_3(C) \gets \vec{x}](C=c)$ ,then there exists
%a context $\vec{u}'$ such that
%$(M_3,\vec{u}') \models [\G_3(C) \gets \vec{x}](C=c)$.
%joe4: you're right; corrected a typo below
%$(M_3,\vec{u}) \models [\G_3(C) \gets \vec{x}](C=c)$ ,then there exists
$(M_3,\vec{u}) \models [\G_3(C) \gets \vec{x}](C=c)$, then there exists
a context $\vec{u}'$ such that
$(M_2,\vec{u}') \models [\G_3(C) \gets \vec{x}](C=c)$.
But	because $\mathcal{G}_2(C) = \mathcal{G}_3(C)$
%joe1
%we can see that 	$\vec{X}=\vec{x}$ is also an intervention on 
%	$\mathcal{G}_2(C)$, so because we know that
and $M_1 \succeq_C M_2$,
%joe1
%we have that those same values can be achieved under
%	intervention $\vec{X}=\vec{x}$ in $M_1$.
%joe3
%It follows that there exists a context $\vec{u}''$ such that
it follows that there exists a context $\vec{u}''$ such that
$(M_1,\vec{u}'') \models [\G_3(C) \gets \vec{x}](C=c)$.
%joe1
%Thus we have condition b of the definition, so $M_1 \succeq_C M_3$.
%joe2
%Thus,  condition (b) of Definition~\ref{dfn:canexplain} holds,
Thus,  condition (c) of Definition~\ref{dfn:canexplain} holds,
so $M_1 \succeq_C M_3$.  
	  %Assume $\mathcal{M}_1 \succeq_C \mathcal{M}_2$ and
          %$\mathcal{M}_2 \succeq_C \mathcal{M}_3$.
          %$\mathcal{G}_3(C) \subseteq \mathcal{G}_2(C)$ and
          %$\mathcal{G}_2(C) \subseteq \mathcal{G}_1(C)$, so it
          %follows that
          %$\mathcal{G}_3(C) \subseteq \mathcal{G}_1(C)$. For setting
          %of exogenous variables $u_3$ and every intervention $x$ on
          %$\mathcal{G}_3(C)$ such that $(\mathcal{M}_3,u_3) \vDash
          %[\mathcal{G}_3 \leftarrow x] C=c$ for some particular $c$,
          %there is a setting $u_2$ for $\mathcal{M}_2$ such that
          %$(\mathcal{M}_2,u_2) \vDash [\mathcal{G}_3 \leftarrow x]
          %C=c$.  But then in turn because
          %$\mathcal{M}_1 \succeq_C \mathcal{M}_2$ there is a $u_1$
          %such that $(\mathcal{M}_1,u_1) \vDash
          %[\mathcal{G}_3 \leftarrow x] C=c$, so we have that
          %$\mathcal{M}_1 \succeq_C \mathcal{M}_3$. 
%joe1
%
\end{proof}

%joe1*: Meir, we should have an example here
%joe12: Meir, can you add an example?
The requirement in Proposition~\ref{pro:preorder} 
that $M_1$ and $M_3$ are $C$-compatible is necessary, as we show 
%joe3: no need
%in the full paper.
below (see Example~\ref{ex:notlub}).

%joe1
%We can also define what it means for two model to be \textit{$C$-equivalent}.
%We say that
\begin{definition}\label{dfn:C-equivalent} {\rm 
%joe2
%$M_1 =
%((\mathcal{U}_1,\mathcal{V}_1,\mathcal{R}_1),\mathcal{F}_1,\mathcal{G}_1) \equiv_C  
%M_2 =
%((\mathcal{U}_2,\mathcal{V}_2,\mathcal{R}_2),\mathcal{F}_2,\mathcal{G}_2)$
$M_1 \equiv_C M_2$ 
%iff either $C \in U_1 \cap U_2$ and
%$\mathcal{G}_1(C)=\mathcal{G}_2(C)$ or $C \in V_1 \cap V_2$,
iff either (a) $C \in U_1 \cap U_2$, $\R_1(C) = \R_2(C)$, and
$\mathcal{G}_1(C)=\mathcal{G}_2(C)$ or (b) $C \in V_1 \cap V_2$,
$\R_1(C) = \R_2(C)$, 
$\mathcal{G}_1(C)=\mathcal{G}_2(C)$, and
%joe1
%$\mathcal{F}_1(C)=\mathcal{F}_2(C)$. (Technically
$\mathcal{F}_1(C)=\mathcal{F}_2(C)$.%
\footnote{Technically
$\mathcal{F}_1(C)=\mathcal{F}_2(C)$ is not defined if
$\mathcal{U}_1\cup\mathcal{V}_1 \neq \mathcal{U}_2\cup\mathcal{V}_2$;
all we mean is that $Par_1(C)=Par_2(C)$,
$\mathcal{R}_1(D)=\mathcal{R}_2(D)$ for all $D \in Par_1(C)$, and
%joe3
%$(M_1,\vec{u}_1)\vDash [Par_1(C) \leftarrow \vec{p}] C=c$ iff
%$(M_2,\vec{u}_2)\vDash [Par_2(C) \leftarrow \vec{p}] C=c$ for all
$(M_1,\vec{u}_1)\vDash [Par_1(C) \leftarrow \vec{p}] (C=c)$ iff
$(M_2,\vec{u}_2)\vDash [Par_2(C) \leftarrow \vec{p}] (C=c)$ for all
%meir3
%$\vec{u}_2, \vec{u}_2, \textrm{ and } \vec{p}$.}
$\vec{u}_1, \vec{u}_2, \textrm{ and } \vec{p}$.}
}
\end{definition}

%joe1: added
The next result shows that, in a sense, $\succeq_C$ is anti-symmetric.
%joe1

%\begin{claim}
%	(\textbf{Weak Antisymmetry})
%joe1: the next sentence is unnecessary
%Let $M_1 =
%((\mathcal{U}_1,\mathcal{V}_1,\mathcal{R}_1),\mathcal{F}_1,\mathcal{G}_1)$
%and $M_2 =
%((\mathcal{U}_2,\mathcal{V}_2,\mathcal{R}_2),\mathcal{F}_2,\mathcal{G}_2)$.
%joe2: added label
%\begin{proposition}
\begin{proposition}\label{pro:antisymmetry}
%joe2*: made this an iff
%If $M_1 \succeq_C M_2$ and $M_2 \succeq_C M_1$ then $M_1 \equiv_C M_2$.
$M_1 \succeq_C M_2$ and $M_2 \succeq_C M_1$ iff $M_1 \equiv_C M_2$. 
%joe1
%\end{claim}
\end{proposition}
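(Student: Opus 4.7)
My plan is to prove both directions, with the easy direction first serving as a warm-up for the notation.

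For $(\Leftarrow)$, assuming $M_1 \equiv_C M_2$, I would verify both $M_1 \succeq_C M_2$ and $M_2 \succeq_C M_1$; by symmetry it suffices to do one. Conditions (a) and (b) of Definition~\ref{dfn:canexplain} follow immediately from $\mathcal{R}_1(C)=\mathcal{R}_2(C)$ and $\mathcal{G}_1(C)=\mathcal{G}_2(C)$. For condition (c), I would split on the two cases of Definition~\ref{dfn:C-equivalent}. If $C$ is exogenous in both models, then for any intervention on $\mathcal{G}_2(C)$ (which cannot include $C$), the value of $C$ is taken from the context; since the ranges coincide, any value achievable in $M_2$ by picking a context is achievable in $M_1$ by picking the corresponding context. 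If $C$ is endogenous in both models, then $Par_1(C)=Par_2(C)\subseteq \mathcal{G}_1(C)=\mathcal{G}_2(C)$, so an intervention $\mathcal{G}_2(C)=\vec{x}$ fixes all parents of $C$ in each model and forces $C$'s value to be the same (context-independent) function of $\vec{x}$ in both models, by the footnote reading of $\mathcal{F}_1(C)=\mathcal{F}_2(C)$.

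For $(\Rightarrow)$, assume $M_1 \succeq_C M_2$ and $M_2 \succeq_C M_1$. Applying condition (a) in each direction gives $\mathcal{R}_1(C)=\mathcal{R}_2(C)$, and applying condition (b) in each direction gives $\mathcal{G}_1(C)=\mathcal{G}_2(C)$. The real work is to show that $C$ is exogenous in both models or endogenous in both, and in the latter case that the structural equations agree in the footnote's sense. For the type question, I would argue by contradiction: suppose $C$ is endogenous in $M_1$ and exogenous in $M_2$. Fix any intervention $\mathcal{G}_2(C)=\vec{x}$. In $M_1$, since $Par_1(C)\subseteq \mathcal{G}_1(C)=\mathcal{G}_2(C)$, the post-intervention value of $C$ is uniquely determined, say equal to some $c^*$. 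In $M_2$, because $C$ is exogenous and untouched by the intervention, every $c\in\mathcal{R}_2(C)$ is achievable by choosing an appropriate context. Condition (c) for $M_1\succeq_C M_2$ then forces every element of $\mathcal{R}_2(C)$ to equal $c^*$, contradicting $|\mathcal{R}(C)|>1$. The symmetric case is analogous.

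It remains to handle the endogenous case. Under any intervention $\mathcal{G}_2(C)=\vec{x}$, the value of $C$ in each model is the unique value determined by its structural equation (since the intervention fixes all parents); call these $c_1(\vec{x})$ and $c_2(\vec{x})$. Condition (c) for $M_1\succeq_C M_2$ forces $c_1(\vec{x})=c_2(\vec{x})$, and the other direction gives the same thing, so the two models induce the same map $\vec{x}\mapsto c_i(\vec{x})$ on $\prod_{D\in\mathcal{G}_1(C)}\mathcal{R}_1(D)$. To get $Par_1(C)=Par_2(C)$, suppose toward contradiction that some $B\in Par_1(C)\setminus Par_2(C)$. Using the definition of parent in $M_1$, I would pick two interventions $\vec{x}_1,\vec{x}_2$ on $\mathcal{G}_1(C)$ that agree on every coordinate except $B$ and for which $c_1(\vec{x}_1)\neq c_1(\vec{x}_2)$; but $c_2(\vec{x}_1)=c_2(\vec{x}_2)$ since $B\notin Par_2(C)$, contradicting the equality of the two maps. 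This yields $Par_1(C)=Par_2(C)$, ranges of parents agree (they are subsets of $\mathcal{G}_1(C)=\mathcal{G}_2(C)$ with common $\mathcal{R}$-values), and the behavior under interventions on parents coincides, which is exactly what the footnote to Definition~\ref{dfn:C-equivalent} asks for $\mathcal{F}_1(C)=\mathcal{F}_2(C)$. The main obstacle I anticipate is threading the footnote interpretation cleanly through the argument, since the two models may have different variable sets; the rest is essentially bookkeeping on top of the uniqueness observation that $C$'s value is determined once $\mathcal{G}_i(C)$ is set.
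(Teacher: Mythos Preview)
Your proposal is correct and follows essentially the same approach as the paper's proof: both directions use the same ingredients (conditions (a)--(b) give $\mathcal{R}_1(C)=\mathcal{R}_2(C)$ and $\mathcal{G}_1(C)=\mathcal{G}_2(C)$; the mixed exogenous/endogenous case is ruled out by the uniqueness of $C$'s value once $\mathcal{G}_i(C)$ is fixed versus the freedom of an exogenous $C$; and the endogenous case is handled by showing the post-intervention value of $C$ must coincide in both models, from which $Par_1(C)=Par_2(C)$ follows). The only cosmetic difference is that you first establish the equality of the maps $\vec{x}\mapsto c_i(\vec{x})$ and then deduce $Par_1(C)=Par_2(C)$, whereas the paper does a direct case split on whether $Par_1(C)=Par_2(C)$; these are the same argument in slightly different order.
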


\begin{proof}
%joe2: added
The fact that $M_1 \equiv_C M_2$ implies $M_1 \succeq_C M_2$ and
$M_2 \succeq_C M_1$ follows easily from the definitions, using the
fact that $Par_i(C) \subseteq \G_i(C)$.  

%joe2
%We begin by showing that $C$ cannot be
To prove the opposite implication, suppose that $M_1 \succeq_C M_2$
and $M_2 \succeq_C M_1$.  We first show that $C$ cannot be 
%joe1: shortened
%endogenous in one model
%and exogenous in the other.  Assume by way of contradiction
%	that this is the case and assume without loss of generality
%that $C$ is endogenous in $M_1$ and exogenous in $M_2$.
%Consider an intervention $\mathcal{G}_2(C) = \vec{x}$; because
%$C$ is exogenous in $M_2$ we know that there exist two
%exogenous settings $\vec{u}_2 \neq \vec{u'}_2$ such that
%$C$ is exogenous in $M_2$, there must exist two
in either $\U_1 \cap \V_2$ or $\U_2 \cap \V_1$.  Suppose, by way of
	contradiction, that $C \in \U_1 \cap \V_2$.
%meir1
%Consider an intervention $\mathcal{G}_2(C) = \vec{x}$.  Because
%contexts $\vec{u}_2 \neq \vec{u'}_2$ such that
Consider an intervention $\mathcal{G}_2(C) = \vec{x}$.  Because
%joe2
%$C$ is exogenous in $M_2$ there must exist contexts $\vec{u}_2 \neq
$C$ is exogenous in $M_2$, there must exist contexts $\vec{u}_2 \neq
\vec{u'}_2$ such that
$(M_2,\vec{u}_2) \vDash [\mathcal{G}_2(C) \leftarrow \vec{x}]
%joe1
%C=c_2$ and $(M_2,\vec{u'}_2) \vDash
%	[\mathcal{G}_2(C) \leftarrow \vec{x}] C=c'_2$ for some $c_2$
(C=c_2)$ and $(M_2,\vec{u}'_2) \vDash
	[\mathcal{G}_2(C) \leftarrow \vec{x}]( C=c'_2)$ for some $c_2$
	and $c'_2$ such that $c_2 \neq c'_2$.  Now consider this same
	intervention in $M_1$.
%joe1
%We know that $C$ is endogenous in
%	$M_1$, $\mathcal{G}_1(C) = \mathcal{G}_2(C)$ from part a of
%	the $\succeq_C$ relation, and
%joe1: added
Since $M_1 \succeq_C M_2$ and $M_2 \succeq M_1$, we have that
%meir1
%$\mathcal{G}_1(C) = \mathcal{G}_2(C)$.  By assumption,
$\mathcal{G}_1(C) = \mathcal{G}_2(C)$.  By definition,
$Par_1(C) \subseteq \mathcal{G}_1(C)$.  Thus there
%joe2
%has to exist a unique $c_1$ such that, for all exogenous settings
must exist a unique $c_1$ such that, for all exogenous settings
	$\vec{u}_1$ in $M_1$, $(M_1,\vec{u}_1) \vDash
%joe1
%[\mathcal{G}_2(C) \leftarrow \vec{x}] C=c_1$.  But because
[\mathcal{G}_2(C) \leftarrow \vec{x}]( C=c_1)$.  But because
	$c_2 \neq c'_2$,
%joe2
%it cannot be that both $c_1 = c_2$ and $c_1 = c_'_2$, so we have that
%$M_1 \not\succeq_C M_2$,
there cannot be contexts $\vec{u}_1$ and $\vec{u}_1'$ such that 
$(M_1,\vec{u}_1) \vDash [\mathcal{G}_2(C) \leftarrow \vec{x}](C=c_2)$
and $(M_1,\vec{u}_1') \vDash [\mathcal{G}_2(C) \leftarrow \vec{x}](C=c_2')$.
%giving us a
%	contradiction. 
This contradicts the assumption that $M_1 \succeq_C M_2$.
A similar argument shows that $C$ cannot be in $\U_2 \cap \V_1$.

%joe2: added paragraph
It is almost immediate from the definition of $\succeq_C$ that if
$M_1 \succeq_C M_2$, $M_2 \succeq_C M_1$, and
%joe3
%$C \in (\U_1 \cap \U_2) \cup (\V_1 \cap \V_2)$ that $\R_1(C)
$C \in (\U_1 \cap \U_2) \cup (\V_1 \cap \V_2)$, then $\R_1(C)
= \R_2(C)$ and $\G_1(C) = \G_2(C)$.  
%joe4
%It follows that if $C \in \U_1 \cap \U_2$, then $M_1 \equiv_1 M_2$.
It follows that if $C \in \U_1 \cap \U_2$, then $M_1 \equiv_C M_2$.
It remains to show that if $C \in \V_1 \cap \V_2$, then $\F_1(C) = \F_2(C)$.

%joe1
%If $C \in U_1 \cap U_2$ then from part a of the can explain relation
%we know that $\mathcal{G}_1(C) = \mathcal{G}_2(C)$, and so by definition 
%$\mathcal{M}_1 \equiv_C \mathcal{M}_2$ and we are done.  So
%joe3: redundant
%If $C \in \U_1 \cap \U_2$, then since, 
%as we have observed, $\mathcal{G}_1(C) = \mathcal{G}_2(C)$, it is 
%immediate from the definition that $\mathcal{M}_1 \equiv_C \mathcal{M}_2$.

%joe2
%Finally, suppose that $C \in V_1 \cap V_2$.
So suppose that $C \in V_1 \cap V_2$.
%From part a of the
%definition of the can explain relation we again know that
%$\mathcal{G}_1(C) = \mathcal{G}_2(C)$.  First let us consider the case
%where $Par_1(C) = Par_2(C)$; $\mathcal{F}_1(C)$ must be the same as
%joe2
%First suppose that $Par_1(C) = Par_2(C)$.
If $Par_1(C) = Par_2(C)$, then
%joe1*: I found this hard to follow.  
%$\mathcal{F}_1(C)$ must be the same as
%$\mathcal{F}_2(C)$ under all settings of the parents, so the two
%functions must in fact be equal.  In the case where $Par_1(C) \neq
%joe2
%Since $\G_1(C) = \G_2(C)$ and $Par_i(C) \subseteq \G_i(C)$ for
since $\G_1(C) = \G_2(C)$ and $Par_i(C) \subseteq \G_i(C)$ for
%meir1
%$i=1,2$, it that
$i=1,2$, it follows that 
$(M_1, \vec{u}_1) \vDash
[\mathcal{G}_2(C) \leftarrow \vec{x}]( C=c)$ iff
$(M_2, \vec{u}_2) \vDash
[\mathcal{G}_2(C) \leftarrow \vec{x}]( C=c)$
for all contexts $\vec{u}_1$ and $\vec{u}_2$, so
%joe2
%$\F_1(C) = \F_2(C)$, and $M_1 \equiv_C M_2$.  On the other hand, if
$\F_1(C) = \F_2(C)$.  On the other hand, if
$Par_1(C) \ne 
Par_2(C)$,
%joe1
%there must be some element $D$ that is in one but not the
%other.  Assume without loss of generality that $D \in Par_1(C)$ but
%not in $Par_2(C)$.  Then we can run a similar argument as above.
then without loss of generality there is some variable $D \in Par_1(C)
- Par_2(C)$.  
%joe3: no such argument above
%An argument similar to that above now shows that
%there exist two interventions $\mathcal{G}_2(C) = \vec{x}$ and
There must thus exist two interventions $\mathcal{G}_2(C) = \vec{x}$ and
%joe1
%$\mathcal{G}_2(C) = \vec{y}$ such that they differ only on $D$ and,
%meir1
%$\mathcal{G}_2(C) = \vec{y}$ that they differ only on the value of $D$ and,
$\mathcal{G}_2(C) = \vec{y}$ that differ only on the value of $D$
%joe3
%and, for some $c$, $(\mathcal{M}_1,u'_1) \vDash
such that for some $c \in \R(C)$, we have $(M_1,u'_1) \vDash
%joe1
%[\mathcal{G}_2(C) \leftarrow \vec{x}]C=c$ and
[\mathcal{G}_2(C) \leftarrow \vec{x}](C=c)$ and
%joe3
%$(\mathcal{M}_1,u'_1) \vDash
$(M_1,u'_1) \vDash
[\mathcal{G}_2(C) \leftarrow \vec{y}] \neg(C=c)$ for all exogenous
settings $u'_1$ in $M_1$.  Because $D \notin Par_2(C)$, we know that
interventions $\mathcal{G}_2(C) = \vec{x}$ and $\mathcal{G}_2(C)
%joe3
%= \vec{y}$ will give the same value of $C$ in $\mathcal{M}_2$ for all
= \vec{y}$ will give the same value of $C$ in $M_2$ for all
settings of exogenous variables $u_2$.
%joe2
%[\mathcal{G}_2(C) \leftarrow \vec{x}]C=c$ and
Thus, it is not the case that
%joe3
%$\mathcal{M}_1$ can explain $\mathcal{M}_2$ with respect to $C$,
$M_1$ can explain $M_2$ with respect to $C$,
giving a contradiction. 
	
%joe3
%So we have in all cases that $\mathcal{M}_1 \equiv_C \mathcal{M}_2$.
So we have in all cases that $M_1 \equiv_C M_2$, as desired.
%joe1
%
\end{proof}

%joe1
%Thus we see that the can explain relation is a preorder on pairwise
%compatible models and also has this weak antisymmetry property. 

%joe2
%\subsection{Combining Compatible Models}
\subsection{Combining compatible models}

We now turn to compatibility and combination of causal models.
%joe3: removed paragraph break
%
%joe2
%We will begin by defining a simplified notion of compatibility and an
We start by defining a simplified notion of compatibility and an
%joe3
%${\oplus'}$ combination that will get us most of the way there.
%meir4: per reviewer 2
%${\oplus'}$ that gets us most of the way there.
operator ${\oplus'}$ that gets us most of the way there.
%joe2
%In a subsequent subsection we will point out a small shortcoming of
%${\oplus'}$ and provide an updated ${\oplus}$ that captures our true
Unfortunately, as we show, $\oplus'$ has a small shortcoming, so we then
%joe3
%provide an updated ${\oplus}$ that captures our true
%notion of compatibility and combination.
modify it to get a more reasonable operator $\oplus$.

%joe2
\begin{definition}\label{dfn:compatible}
%joe2: taking advantage of notation
%Models $M_1 = (\mathcal{S}_1,\mathcal{F}_1,\mathcal{G}_1)$ and $M_2 =
%(\mathcal{S}_2,\mathcal{F}_2,\mathcal{G}_2)$ are compatible if 1) for
$M_1$ and $M_2$ are \emph{compatible} if,
%joe2: no longer need for clause
%\begin{itemize}
%\item for
%all $C \in (\mathcal{U}_1 \cup \mathcal{V}_1) \cap
%(\mathcal{U}_2 \cup \mathcal{V}_2)$, $\mathcal{R}_1(C)
%= \mathcal{R}_2(C)$ and 2) for all $C \in
%= \mathcal{R}_2(C)$, and
%\item
for all $C \in
(\mathcal{U}_1 \cup \mathcal{V}_1) \cap
(\mathcal{U}_2 \cup \mathcal{V}_2)$, either $M_1 \succeq_C M_2$ or
$M_2 \succeq_C M_1$.   
\end{definition}

%meir5: adding intuition
%If $M_1$ and $M_2$ are compatible then the combined model $M_1
%joe6: slight rewording
%If $M_1$ and $M_2$ are compatible then, for each variable, we intuitively
%want the combined model to take all of the information from the model that
%can explain the other; so if $M_1$ can explain $M_2$ with respect to $C$ 
If $M_1$ and $M_2$ are compatible then, for each variable $C$, we intuitively
want the combined model to take all of the information for $C$ from
the model that 
best explains $C$. So if $M_1$ can explain $M_2$ with respect to $C$,
then we want the combined model to use $M_1$'s focus function and structural
equation (if $C$ is endogenous in $M_1$) for $C$.
Formally, the combined model $M_1
{\oplus'} M_2 =
((\mathcal{U},\mathcal{V},\mathcal{R}),\mathcal{F},\mathcal{G})$ is
%joe2
%defined as follows:\\
%In the combined model, $\mathcal{U}\cup\mathcal{V} =
defined as follows:
\begin{itemize}
\item 
$\mathcal{U}\cup\mathcal{V} =
%joe3
%(\mathcal{U}_1\cup\mathcal{V}_1)\cup(\mathcal{U}_2\cup\mathcal{V}_2)$;
(\mathcal{U}_1\cup\mathcal{V}_1)\cup(\mathcal{U}_2\cup\mathcal{V}_2)$
%joe2: slowing down
(so the exogenous and endogenous variables in the combined model
comprise all the endogenous and exogenous variables in $M_1$ and $M_2$).
%joe2
%We decide which variables are endogenous in the combined model and
%which are exogenous as 
A variable $U$ is exogenous in $M_1 \oplus' M_2$ if it is exogenous in 
one of $M_1$ or $M_2$, say $M_i$, and either does not appear in
%meir3: I think these are supposed to be 3s
%$M_{2-i}$ or it appears in $M_{2-i}$ but $M_i \succeq_U M_{2-i}$; the
%meir5: clarrifying, per reviewer 2
%$M_{3-i}$ or it appears in $M_{3-i}$ but $M_i \succeq_U M_{3-i}$; the
%remaining variables are endogenous.
%joe6
%$M_{3-i}$ (i.e.\ the other model) or it appears in $M_{3-i}$ but $M_i
$M_{3-i}$ (i.e., the other model) or it appears in $M_{3-i}$ but $M_i 
\succeq_U M_{3-i}$; the remaining variables are endogenous.
Formally,  
%meir1: modifying this because I think it's equivalent and
%will make the proof easier.
% $\mathcal{U} = (\mathcal{U}_1 - \mathcal{V}_2) \cup (\mathcal{U}_2
% - \mathcal{V}_1) \cup \{C\;|\;\exists i \in\{1,2\} \textrm{ s.t. }
% C\in{U_i} \textrm{ and } M_i \succeq_C M_{2-i}\}$ and $\mathcal{V} =
% (\mathcal{V}_1 - \mathcal{U}_2) \cup (\mathcal{V}_2
% - \mathcal{U}_1) \cup \{C\;|\;\exists i \in\{1,2\} \textrm{ s.t. }
% C\in{V_i} \textrm{ and } M_i \succeq_C M_{2-i}\}$.\\ 
$\mathcal{U} = (\mathcal{U}_1 -
(\mathcal{U}_2 \cup \mathcal{V}_2)) \cup (\mathcal{U}_2 -
(\mathcal{U}_1 \cup \mathcal{V}_1)) \cup \{C : \exists
%joe2
%i \in\{1,2\} \textrm{ s.t. } C\in{U_i} \textrm{ and } M_i \succeq_C
i \in\{1,2\} (C\in{U_i} \textrm{ and } M_i \succeq_C
%meir3: should be 3
%M_{2-i}\}$ and $\mathcal{V} = (\mathcal{V}_1 -
M_{3-i}\}$ and $\mathcal{V} = (\mathcal{V}_1 -
(\mathcal{U}_2 \cup \mathcal{V}_2))) \cup (\mathcal{V}_2 -
(\mathcal{U}_1 \cup \mathcal{V}_1)) \cup \{C: \exists
%joe2
%i \in\{1,2\} \textrm{ s.t. } C\in{V_i} \textrm{ and } M_i \succeq_C
%M_{2-i}\}$.\\
i \in\{1,2\} (C\in{V_i} \textrm{ and } M_i \succeq_C
%meir3
%M_{2-i})\}$.  
M_{3-i})\}$. 
%joe2: you should avoid ``for any'', since it's not always clear if it
%means ``for some'' or ``for all''
%For any $C$ such that
\item For 
$C \in
(\mathcal{U}_1 \cup \mathcal{V}_1)-(\mathcal{U}_2 \cup \mathcal{V}_2)$,
%joe2
%we let $\mathcal{R}(C) = \mathcal{R}_1(C)$, $\mathcal{F}_C
%joe3
%set $\mathcal{R}(C) = \mathcal{R}_1(C)$, $\mathcal{F}_C
set $\mathcal{R}(C) = \mathcal{R}_1(C)$, $\mathcal{F}(C)
= \mathcal{F}_1(C)$, and $\mathcal{G}(C) = \mathcal{G}_1(C)$.
%joe2
%Similarly for any $C \in
\item Similarly, for $C \in
(\mathcal{U}_2 \cup \mathcal{V}_2)-(\mathcal{U}_1 \cup \mathcal{V}_1)$,
%joe2
%we let $\mathcal{R}(C) = \mathcal{R}_2(C)$, $\mathcal{F}_C
set $\mathcal{R}(C) = \mathcal{R}_2(C)$, $\mathcal{F}_C
= \mathcal{F}_2(C)$, and $\mathcal{G}(C) = \mathcal{G}_2(C)$.
%joe2
\item
For
$C \in (\mathcal{U}_1 \cup \mathcal{V}_1) \cap
(\mathcal{U}_2 \cup \mathcal{V}_2)$,
%joe2
%it must be the case that one of
%the models $M_{i}$ can explain the other with respect to $C$, in which
%case we let
we must have either $M_1 \succeq_C M_2$ or $M_2 \succeq_C M_1$.  If
%meir3
%$M_i \succeq M_{2-i}$, then set
$M_i \succeq M_{3-i}$, then set
%joe3
%$\mathcal{R}(C) = \mathcal{R}_i(C)$, $\mathcal{F}_C
$\mathcal{R}(C) = \mathcal{R}_i(C)$, $\mathcal{F}(C)
= \mathcal{F}_i(C)$, and $\mathcal{G}(C) = \mathcal{G}_i(C)$.
%joe2: added; note that this wouldn't be true for \R without my
%earlier change
(By Proposition~\ref{pro:antisymmetry}, this is well defined: if
$M_1 \succeq_C M_2$ and 
$M_2 \succeq_C M_1$,  then $\R_1(C) = \R_2(C)$, $\F_1(C) = \F_2(C)$,
and $\G_1(C) = \G_2(C)$.)
\end{itemize}

%joe2: cut from here, since I said it earlier (ignore the joe2
%comments up to the \end{commentout}
\commentout{
Now that we have defined our basic notion of compatibility, it is
 worth reconsidering the example from the introduction. 
 Recall the basic scenario: a medical scientist is trying to
 understand under what 
conditions a particular reaction occurs.  She consults with two
experts in the area.  The first of these experts specializes in the
%joe2
%exact mechanism by which this reaction occurs.  The other expert
%specializes not in this reaction per se but in how one of the
%reactants gets produced.
exact mechanism by which this reaction occurs; the second
in how one of the reactants gets produced.
 The original scientist now wants to know how
to combine the information provided by the two experts.

%joe2
%Formally, the first expert provides model $M_1$ and the second
%provides model $M_2$.  The two models provided are depicted
%in \ref{fig:incompatibleachexample1}.The main difference between
%these two models is that modeler $1$ takes into account the effect
Suppose that the first expert provides model $M_1$ and the second
provides model $M_2$; the models  are depicted
in Figure~\ref{fig:incompatibleachexample1}.The main difference between
these models is that expert $1$ takes into account the effect
%joe2
%that temperature $T$ can have on reaction $C$ while modeler $2$, who
that temperature $T$ can have on reaction $C$, while modeler $2$, who
does not, takes into account the effect temperature can have on the
production of reactant $B$.  Formally, the parameters of these two
%joe2
%models are defined as follows: for the ranges, we let
models are defined as follows: for the ranges, we have
$\mathcal{R}_{1}(T) = \mathcal{R}_{2}(T)
= \{\mathsf{Freezing},\mathsf{Cool},\mathsf{Hot}\}$,
$\mathcal{R}_{1}(A')=\mathcal{R}_{1}(B')=\mathcal{R}_{2}(A')=\mathcal{R}_{2}(B')
= \{1,2,3\}$,
$\mathcal{R}_{1}(A)=\mathcal{R}_{1}(B)=\mathcal{R}_{2}(A)=\mathcal{R}_{2}(B)
= \{1,\dots,5\}$, and $\mathcal{R}_{1}(C)=\mathcal{R}_{2}(C)
%joe2: for consistency with later example
%= \{\mathsf{True},\mathsf{False}\}$.  The structural equations
= \{\mathsf{true},\mathsf{false}\}$.
The structural equations 
%joe2
%for model $1$ are defined as $\mathcal{F}_{1}(A) = A'$,
%$\mathcal{F}_{1}(B) = B'$, and $\mathcal{F}_{1}(C)$ is defined by the
%logical formula $((T = \mathsf{Freezing}) \wedge (A+B \geq 9)) \vee
%((T=\mathsf{Hot}) \wedge (A+B \geq 4))$.  For model $2$ the structural
%equations are $\mathcal{F}_{2}(A) = A'$, $\mathcal{F}_{2}(B)$ is
%defined as $B' + 2$ if $T = \mathsf{Freezing}$ and $B'$ otherwise, and
%$\mathcal{F}_{2}(C)$ is defined by the logical formula $A+B \geq 5$. 
in $M_1$ are defined by taking $A = A'$, 
$B = B'$, and $C = 
((T = \mathsf{Freezing}) \wedge (A+B \geq 9)) \vee
((T = \mathsf{Cool}) \wedge (A+B \geq 5)) \vee
((T=\mathsf{Hot}) \wedge (A+B \geq 4))$.  In $M_2$ the structural
equations are $A = A'$; $B = B' + 2$ if $T = \mathsf{Freezing}$ and
$B= B'$ otherwise; and 
$C = \mathsf{true}$ if $A+B \geq 5$ and $C= \mathsf{false}$ otherwise.
}
%joe2: \end{commentout}

%joe2: 
Returning to Example~\ref{ex1}, it is easy to check that the models
$M_1$ and $M_2$ are compatible.
%joe2*: Meir, you need to say more here about how the \succeq relation
%works variable by variable.
%meir2: added the label for the ref, because it was giving ??.
% Also modifying the name of the combined model.  Will come back
% and look at the comment above time permitting.
%joe3: added
Specfically, we have $M_1 \succeq_C M_2$ and $M_2 \succeq_B M_1$. (For
all other variables $D$, we have $M_1 \succeq_D M_2$ and
$M_2 \succeq_D M_1$.)
Thus, we can combine $M_1$ and $M_2$ to get the model $M_1 \oplus' M_2$ depicted
in Figure~\ref{fig:incompatibleachexample1}

%joe2*: Meir, the models are denoted using calligraphic M's in the
%figure; this has to be fixed.  It would also be more standard if
%instead of using large circles and putting the variable name inside,
%you used dots and labeled the nodes.  This should enable you to use a
%larger font for the variables names, which would also be good.
%\begin{figure*}[t]
\begin{figure}[htb]
\vspace{-.1in}
\centering
	\includegraphics[width=1.0\linewidth]{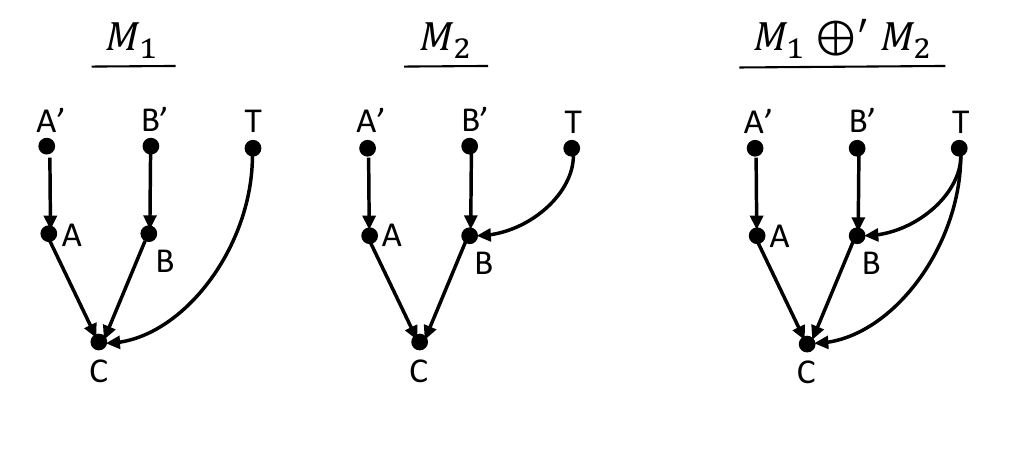}
%joe2:
\vspace{-.3in}
%joe3
%\caption{$M_1$ and $M_2$ are the models of the two different
%scientists trying to understand reaction $C$.  Our knowledge of what
%each scientist did and didn't consider allows us to combine them into
%the model on the right.} 
\caption{Taking into account what each scientist focused on allows us
to combine $M_1$ and $M_2$ to get the model shown on the right.}
	\label{fig:incompatibleachexample1}
%joe2
%\end{figure*}
\end{figure}

%joe2
%Under the the original definitions proposed by Alrajeh, Chockler, and
%Halpern (2018) (\textbf{remember to insert the actual citation}),
%Halpern \citeyear{ACH18}, these two models would be declared
%incompatible.  In their setting no
%information about who considered what possibilities is available, so
In the ACH approach, $M_1$ and $M_2$ 
would be declared incompatible.  Since no
information about who considered what possibilities is available, 
expert $2$ is assumed to have come to the conclusion that $T$ does not
%joe2
%directly affect $C$ and so therefore to be in fundamental disagreement
directly affect $C$, and therefore be in fundamental disagreement
with expert $1$.  In our setting, though, we can take advantage of
%joe2
%this additional information to determine how to combine these models.
%joe3
%the focus information to determine how to combine these models.
the focus information to determine whether there is truly a fundamental
disagreement.  The disagreement may just be a result of the fact that
one of the experts was not focusing on certain variables. 
%joe2*: say something more about how you take the focus information
%into account here.
%joe3
%In situations where this information is available, this can allow us
In situations where this information is available, it can allow us
to take more complete advantage of the different areas of expertise
that different experts may have. 

%joe2: 
%Slightly more generally, the definitions proposed by Alrajeh,
%Chockler, and Halpern (2018) (\textbf{remember to insert the actual
%citation}) is designed to take into account situations where one
%joe3
%Slightly more generally, the ACH definitions
Slightly more generally, the ACH definition 
is designed to take into account situations where one
expert's model is more detailed in terms of the topology of the causal
%joe2
%graph, i.e.\ where the causal relationship is considered to be
graph, that is, where the causal relationship is considered to be
mediated by variables the other modeler was simply not aware existed.
%joe3
%In our setting, though, there is more information available, allowing
%meir3: not certain about this comma, you may want to add it back
%joe4: I think that either way is OK
%In our setting, there is more information available, allowing
In our setting there is more information available, allowing
us to consider another sense in which one modeler's understanding
%joe2
%might be locally more detailed than the other; situations where one
%joe3
%might be locally more detailed than the other, namely, situations where one
%expert can explain the other's observed results based on variables the
%other expert was not considering but which might lead to different
%outcomes under different settings.   
might be locally more detailed than another's, namely, situations where one
expert can explain the other's observed results by taking into account
the fact that the other was not focusing on certain variables.

%joe2*: your discussion of the example above should illustrate this point.

%joe2
%There are a number of facts worth proving about this definition of
%combination:
This notion of combination is commutative and, when defined, associative:

%joe2: added label
\begin{proposition}\label{pro:oplusprop}
%joe2: again, taking advantage of notation
%Given three pairwise compatible models $M_1 =
%	(\mathcal{S}_1,\mathcal{F}_1,\mathcal{G}_1),\; M_2 =
%	(\mathcal{S}_2,\mathcal{F}_2,\mathcal{G}_2),\; M_3 =
%	(\mathcal{S}_3,\mathcal{F}_3,\mathcal{G}_3)$
Given three pairwise compatible models $M_1$, $M_2$, and $M_3$, 
	\begin{enumerate}[(a)]
%joe2*: I don't think that (a) is needed, since it's in Proposition 3.4.
%I already said (b) above.
\commentout{
\item For all $C \in (\mathcal{V}_1 \cup \mathcal{V}_2)$, if $M_1 \succeq_{C} M_2$ and $M_2 \succeq_{C} M_1$ then $\mathcal{F}_1(C)=\mathcal{F}_2(C)$ and $\mathcal{G}_1(C)=\mathcal{G}_2(C)$.
		\item $M_1 {\oplus'} M_2$ is well defined.
}
%joe2:\end{commentout}
		\item $M_1 {\oplus'} M_2 = M_2 {\oplus'} M_1$;
%joe2*: added a new condition that makes associativity
%eas to prove.  Please check carefully!
%meir3: I believe this is supposed to be U_2 \cup V_2,
%but other than that this part (b) looks good.
\item if $M_1 \succeq_C M_2$ or $C \in (\U_1 \cup \V_1) - (\U_2
%- \V_2)$, then $M_1 \oplus' M_2 \equiv_C M_1$;
\cup \V_2)$, then $M_1 \oplus' M_2 \equiv_C M_1$;
%meir3: I'm not sure I buy the new proof of this.
%comment added there to explain the problem
%meir3b: actually, looking at it again, I think I can make 
%that proof work
\item if $M_3$ is compatible with $M_1 {\oplus'} M_2$ and $M_1$ is
compatible with $M_2 {\oplus'} M_3$ then $M_1 {\oplus'} (M_2
{\oplus'} M_3) = (M_1 {\oplus'} M_2) {\oplus'} M_3$.
\end{enumerate}
\end{proposition}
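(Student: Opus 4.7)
For parts (a) and (b), I plan to proceed by direct unpacking of the definition of $\oplus'$. Part (a) follows because every clause in the definition is syntactically symmetric in $M_1,M_2$; the only potential asymmetry---the choice of $i$ in the fourth bullet when both $M_i\succeq_C M_{3-i}$ hold---is resolved by Proposition~\ref{pro:antisymmetry}, which forces $M_1\equiv_C M_2$ so the choice is immaterial. For part (b), if $M_1\succeq_C M_2$ and $C$ is shared, the fourth bullet assigns $\R(C),\F(C),\G(C)$ from $M_1$ and the first bullet places $C$ on $M_1$'s side of the exogenous/endogenous split; if $C\in(\U_1\cup\V_1)-(\U_2\cup\V_2)$, the second bullet does the same directly. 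In either case $M_1\oplus' M_2\equiv_C M_1$ by Definition~\ref{dfn:C-equivalent}.

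For part (c), the plan is to check equality variable-by-variable, since both sides share the variable set $(\U_1\cup\V_1)\cup(\U_2\cup\V_2)\cup(\U_3\cup\V_3)$. Fix $C$ and let $S_C=\{i:C\in\U_i\cup\V_i\}$. When $|S_C|=1$, both sides inherit $C$'s information unchanged from the unique containing model. When $|S_C|=2$, say $S_C=\{1,2\}$, the LHS inner combination carries the information of the local winner $M_k\in\{M_1,M_2\}$ by part (b), and the outer combination preserves this since $C\notin M_3$. On the RHS, part (b) gives $M_2\oplus' M_3\equiv_C M_2$; since $\G_2(C)\supseteq Par_2(C)$, any intervention on $\G_2(C)$ in $M_2\oplus' M_3$ forces $C$'s value to $\F_2(C)$ of the intervention independently of the rest of that model, so condition (c) of Definition~\ref{dfn:canexplain} for $(M_1,M_2\oplus' M_3)$ reduces to that for $(M_1,M_2)$, and the same local winner emerges. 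The remaining two-element subcases follow by part (a).

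The hard case is $|S_C|=3$. Pairwise compatibility and Proposition~\ref{pro:preorder} give a total preorder on $\{M_1,M_2,M_3\}$ under $\succeq_C$; fix a top element $M_t$. I claim both sides take $C$'s information from $M_t$ (or an $\equiv_C$-equivalent in the tied case). On the LHS, let $M_k$ be the higher of $M_1,M_2$ under $\succeq_C$, so $M_1\oplus' M_2\equiv_C M_k$ by part (b); the hypothesis of (c) forces $M_1\oplus' M_2\succeq_C M_3$ or $M_3\succeq_C M_1\oplus' M_2$. The key observation is that $\G_k(C)\supseteq Par_k(C)$, so interventions on $\G_{M_1\oplus' M_2}(C)=\G_k(C)$ force $C$'s value to $\F_k(C)$ of the intervention in both $M_k$ and $M_1\oplus' M_2$, making $M_3\succeq_C M_1\oplus' M_2$ equivalent to $M_3\succeq_C M_k$. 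A case split on $M_t$, with Proposition~\ref{pro:antisymmetry} collapsing ties, confirms the LHS outer takes $M_t$'s $C$-information: if $M_t\in\{M_1,M_2\}$ then $M_t=M_k$, and either $M_1\oplus' M_2\succeq_C M_3$ (outer takes $M_k$'s info) or $M_3\succeq_C M_1\oplus' M_2$ (giving $M_3\succeq_C M_k$ together with $M_k\succeq_C M_3$, hence $M_k\equiv_C M_3$ by antisymmetry); if $M_t=M_3$, then $M_3\succeq_C M_k$ yields $M_3\succeq_C M_1\oplus' M_2$ and the outer takes $M_3$'s info. The RHS argument is symmetric.

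The main obstacle is precisely this $|S_C|=3$ analysis: the intermediate combined model has the same $\R(C),\F(C),\G(C)$ as its local winner but may carry different structural equations for other variables, so generic $\succeq_C$ comparisons with $M_3$ need not transfer between the intermediate and the local winner. The reduction works only in the direction $M_3\succeq_C(M_1\oplus' M_2)$, which is the direction I actually need, because the interventions appearing there cover $Par(C)$ and thus make $C$'s value insensitive to the rest of the model; combined with the compatibility hypothesis of (c) and Propositions~\ref{pro:preorder} and~\ref{pro:antisymmetry}, this suffices to force both combination orders to pick the same $M_t$.
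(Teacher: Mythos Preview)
Your approach is essentially the same as the paper's: both prove (a) and (b) directly from the definition of $\oplus'$, and both prove (c) by showing $\equiv_C$ variable-by-variable with a case split on how many of the three models contain $C$, invoking Propositions~\ref{pro:preorder} and~\ref{pro:antisymmetry} and the compatibility hypotheses in the three-model case. Your explicit isolation of the one-directional transfer lemma---that $M\equiv_C M'$ lets you replace $M'$ by $M$ only in the \emph{second} argument of $\succeq_C$, because interventions on $\G_{M'}(C)\supseteq Par_{M'}(C)$ pin down $C$'s value independently of the rest of the model---is a point the paper uses implicitly but does not articulate, so your write-up is in that respect cleaner. One small wrinkle: saying the remaining two-element subcases ``follow by part (a)'' is accurate for $S_C=\{2,3\}$ (relabel $M_1\leftrightarrow M_3$; the hypotheses of (c) are preserved by commutativity), but $S_C=\{1,3\}$ is not reached by such a relabeling and needs its own brief check---it is in fact the easiest subcase, since both inner combinations trivially pass $C$ through and the outer comparisons reduce directly to $M_1$ versus $M_3$.
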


\begin{proof}
%joe2:
\commentout{
\begin{enumerate}[(a)]
\item This follows from the weak antisymmetry of $\succeq_C$, proved above.
		\item What we need to show is that if $M_1 \succeq_C M_2$ and $M_2 \succeq_C M_1$ then $M_1$ and $M_2$ are  $C$-equivalent.  We show this in four parts: i) If $M_1 \succeq_C M_2$ and $M_2 \succeq_C M_1$ then $C \in (\mathcal{U}_1 \cup \mathcal{V}_1) \cap (\mathcal{U}_2 \cup \mathcal{V}_2)$, so by the definition of compatibility we have that $\mathcal{R}_1(C)=\mathcal{R}_2(C)$.  ii) If $M_1 \succeq_C M_2$ and $M_2 \succeq_C M_1$ then $\mathcal{F}_1(C) = \mathcal{F}_2(C)$ by part (a).  iii) If $M_1 \succeq_C M_2$ and $M_2 \succeq_C M_1$ then $\mathcal{G}_1(C) = \mathcal{G}_2(C)$ by part (a). iv) Finally, we need to show that if $M_1 \succeq_C M_2$ and $M_2 \succeq_C M_1$ then it cannot be the case that $C$ is endogenous in one and exogenous in the other.  But we have already shown this above in our proof of the weak antisymmetry property.

		\item By our definition, nothing depends on which
		model comes before the operator and which after.
                }
%joe1
Commutativity is immediate from the definition of $\oplus'$.
%joe2*: after struggling with this proof for a while, I totally
%rewrote and simplified it.  Please check.  Ignore all the %joe2's
%between here and the \end{commentout}
\commentout{
		\item Let $M_1 {\oplus'} (M_2 {\oplus'} M_3) = ((\mathcal{U}',\mathcal{V}',\mathcal{R}'),\mathcal{F}',\mathcal{G}')$ and $ (M_1 {\oplus'} M_2) {\oplus'} M_3 = ((\mathcal{U}'',\mathcal{V}'',\mathcal{R}''),\mathcal{F}'',\mathcal{G}'')$.
%joe2: \R' = \R'' is now part of the definition of \equiv
%It follows almost immediately from the definitions that
%$\mathcal{R}'=\mathcal{R}''$.  Now we will show that, for all $C$,
We now show that, for all $C$, 
$((\U',\V',\R'),\F',\G') \equiv_C ((\U'',\V'',\R''),\F'',\G'')$.
%joe2
%which combined with the fact that $\R'=\R''$ implies that the two models are
%equal.
It then easily follows that the models are equal

%joe2
%First consider the case where $C$ is in only one of the models.
First consider the case where $C$ is in only one of the three models.
Assume without loss of generality that $C$ is in
$M_1$.  Then it follows almost immediately from our
definitions that $M_1 {\oplus'} (M_2 {\oplus'}
M_3) \equiv_C M_1$ and $(M_1 {\oplus'} M_2) {\oplus'}
%joe2
%M_3 \equiv_C M_1$, so we in turn have that $M_1
M_3 \equiv_C M_1$, so  $M_1
{\oplus'} (M_2 {\oplus'} M_3) \equiv_C (M_1 {\oplus'}
		M_2) {\oplus'} M_3$. Similarly, in the case where $C$
		is only in two models, assume without loss of
		generality that $C$ is in $M_1$ and $M_2$.  Then it
		follows immediately that $M_2 {\oplus'} M_3 \equiv_C
%joe2: shortened
%M_2$ and so we can see that both $M_1 {\oplus'} (M_2
%		{\oplus'} M_2) {\oplus'} M_3 \equiv_C M_1 {\oplus'}
%		M_2$, giving us that $M_1 {\oplus'} (M_2 {\oplus'}
%		M_3) \equiv_C (M_1 {\oplus'} M_2) {\oplus'} M_3$. 
M_2$, and $(M_1 {\oplus'} M_2) \oplus' M_3 \equiv_C
		M_1 {\oplus'} M_2$, so $M_1 \oplus' (M_2
		{\oplus'} M_3) \equiv_C M_1 \oplus' M_2 \equiv_C 
		(M_1 {\oplus'} M_2) {\oplus'} M_3$. 

%joe2
%Thus we focus on the case where $C$ is in all three
% models.  In this case, by the fact that all three
% models are pairwise compatible we know that
% $\succeq_C$ is complete on $M_1$, $M_2$, and $M_3$.
% Combining this with the fact shown above that
%   $\succeq_C$ is a preorder on pairwise compatible
%  models we get that there is some model
Finally, suppose that $C$ is in all three models.  From
Propositions~\ref{pro:preorder} and \ref{pro:antisymmetry},
there is some $M^* \in \{M_1,M_2,M_3\}$ such that $M^* \succeq_C
M_j$ for all $j\in\{1,2,3\}$.
%joe2
%We now proceed to show that i) $M^* \succeq_C (M_i {\oplus'} M_j)$ for all
%                $i,j \in \{1,2,3\}$ and ii) $(M^* {\oplus'} M_i)
We now show that (i) $M^* \succeq_C (M_i {\oplus'} M_j)$ for all
    $i,j \in \{1,2,3\}$ and (ii) $(M^* {\oplus'} M_i)
                {\oplus'} M_j \equiv_C M^*$ for all
                $i,j \in \{1,2,3\}$ if $M_j$ is compatible with $(M^*
                {\oplus'} M_i)$. 

%joe2
%For i) let $M_{ij} = M_i {\oplus'} M_j =
For (i), let $M_{ij} = M_i {\oplus'} M_j =
((\mathcal{U}_{ij},\mathcal{V}_{ij},\mathcal{R}_{ij}),\mathcal{F}_{ij},\mathcal{G}_{ij})$.
%joe2: let's be precise
%We know that $\mathcal{F}_{ij}(C)$ and $\mathcal{G}_{ij}(C)$ must be
%taken from either $M_i$ or $M_j$.
By definition, $\R_{ij}(C) =\ R_i(C) = \R_j(C)$, and either
$\F_{ij}(C) = \F_i(C)$ and $\G_{ij}(C) = \G_i(C)$ or
$\F_{ij}(C) = \F_j(C)$ and $\G_{ij}(C) = \G_j(C)$.
Assume without loss of generality
that $\mathcal{F}_{ij}(C) = \mathcal{F}_{i}(C)$ and
$\mathcal{G}_{ij}(C) = \mathcal{G}_{i}(C)$.  Because $M^* \succeq_C
%joe2: 
%M_i$ we know that $\mathcal{G}_{i}(C) \subseteq \mathcal{G}^*(C)$.
M_i$, $\mathcal{G}_{i}(C) \subseteq \mathcal{G}^*(C)$.
Similarly, because $\mathcal{F}_{ij}(C) = \mathcal{F}_{i}(C)$,
%joe2
%we know that for any intervention on $\mathcal{G}_{i}(C)$,
%the value $\mathcal{F}_{ij}(C)$ takes on can be achieved under some
for all interventions $\mathcal{G}_{i}(C) = \vec{x}$,
the value that $C$ takes on in $M_i$ an be achieved (with the same
intervention) in some context in 
%joe2
%$M^*$.  Thus we have that $M^* \succeq_C M_{ij}$.
$M^*$.  It follows that $M^* \succeq_C M_{ij}$. 
		
%joe2
%For ii) we know that $M^* \succeq_C M_i$, so if $(M^* {\oplus'}
%M_i) \succeq_C M_j$ then it is trivial to see that $(M^* {\oplus'}
For (ii), since $M^* \succeq_C M_i$, if $(M^* {\oplus'}
M_i) \succeq_C M_j$,  then it is easy to see that $(M^* {\oplus'}
%joe2
%M_i) {\oplus'} M_j \equiv_C M^*$.  So assume that $M^* {\oplus'}
%M_i \not\succeq_C M_j$.  Because $M_j$ was assumed to be compatible
%with $M^* {\oplus'} M_i$ it must be the case that $M_j \succeq_C M^*
%{\oplus'} M_i$.  We know that $M^* \succeq_C M_i$, so the structural
M_i) {\oplus'} M_j \equiv_C M^*$.  So suppose that $M^* {\oplus'}
M_i \not\succeq_C M_j$.  Because $M_j$ is compatible with $M^*
{\oplus'} M_i$ by assumption, we must have that $M_j \succeq_C M^* 
{\oplus'} M_i$.  Since $M^* \succeq_C M_i$, the structural
equation for $C$ in $M^* {\oplus'} M_i$ is $\mathcal{F}^*(C)$ and the
%joe2
%focus function has value $\mathcal{G}^*(C)$ for $C$.  Combining these
%Combining these implies that a)
%$\mathcal{G}^*(C) \subseteq \mathcal{G}_{j}(C)$ and b) 
%for every intervention on $\mathcal{G}^*(C)$ the value of
%$\mathcal{F}^*(C)$ can be achieved in $M_j$ under some context, but
%This in turn implies that $M_j \succeq_C M^*$.  We already know,
%however, that $M^* \succeq_C M_j$, so by the weak anti-symmetry result
%above we have that $M^* \equiv_C M_j$.  But then when $(M^* {\oplus'}
focus set at $C$ in $M^* {\oplus'} M_i$ is  $\mathcal{G}^*(C)$.
Thus, $\mathcal{G}^*(C) \subseteq \mathcal{G}_{j}(C)$ and, 
%joe2
%for every intervention on $\mathcal{G}^*(C)$, the value of
%$\mathcal{F}^*(C)$ can be achieved in $M_j$ under some context.
%$\mathcal{F}^*(C)$ can be achieved in $M_j$ under some context.
%This, in turn, implies that $M_j \succeq_C M^*$.  Since
for all intervention $\mathcal{G}^*(C) = \vec{x}$, the value of
of $C$ in $M^* \oplus' M_i$ can be achieved (with the same
intervention) in some context in $M_j$.
This implies that $M_j \succeq_C M^*$.  Since
$M^* \succeq_C M_j$, by Proposition~\ref{pro:antisymmetry}, 
we have that $M^* \equiv_C M_j$.
%joe1*: what does it mean that a model ``takes on the values of \F_j(C)''?
But then when $(M^* {\oplus'}
M_i) {\oplus'} M_j$ takes on the values of $\mathcal{F}_j(C)$ and
$\mathcal{G}_j(C)$ we get that $(M^* {\oplus'} M_i) {\oplus'}
M_j \equiv_C M^*$ as desired. 
		
		Combining these facts we get that both $((\U',\V',\R'),\F',\G') \equiv_C M^*$ and $((\U'',\V'',\R''),\F'',\G'') \equiv_C M^*$ so we are done.

%meir1: So I started writing the proof that $U'=U''$ and $V'=V''$, but
%it turns out I'm pretty sure it's not necessary.  What we show is that 
%both combined models are going to be C-equivalent to $M^*$, and part 
%of the definition of C-equivalence is that it's either endogenous in
%both or exogenous in both.  So for any $C$ it will have to be either 
%exogenous or endogenous in both models that are C-equivalent to M^*.	
\commentout{		%meir1: everything after this point in the proof is added
		Combining these facts we get that both $((\mathcal{U}',\mathcal{V}',\mathcal{R}'),\mathcal{F}',\mathcal{G}') \equiv_C M^*$ and $((\mathcal{U}'',\mathcal{V}'',\mathcal{R}''),\mathcal{F}'',\mathcal{G}'') \equiv_C M^*$.
		
		We now finally show that $\mathcal{U}' = \mathcal{U}''$ and $\mathcal{V}'=\mathcal{V}''$.  Consider an arbitrary $C$.  Assume that $C \in \mathcal{V}'$ and we will show that $C \in \mathcal{V}''$ (a parallel argument works for $\mathcal{U}'$ and $\mathcal{U}''$).
		
		Given that $C \in \V'$, there are three cases we must consider: i) $C \in \V_1 - (\U_{2 \oplus' 3} \cup \V_{2 \oplus' 3})$, ii) $C \in \V_{2 \oplus' 3} - (\U_1 \cup \V_1)$, and iii) $\exists i,j \in \{1,2\oplus' 3\}$ such that $i \neq j$, $C \in \V_i$, and $M_i \succeq_C M_j$.
		
		For case i, $C \notin (\U_{2 \oplus' 3} \cup \V_{2 \oplus' 3})$ implies that $C \notin \U_2 \cup \V_2$, so by definition because $C \in \V_1$ we have that $C \in \V_{1 \oplus' 2}$.  But then by the same reasoning $C \notin \U_3 \cup \V_3$, so it follows by definition that $C \in \V''$
		
		For case ii we have that $C \notin \U_1 \cup \V_1$ and $C \in \V_{2 \oplus' 3}$, which further breaks down into three cases.  If $C \in \V_2$ and $C \notin \U_3 \cup V_3$, then from the fact that $C \notin \U_1 \cup \V_1$ we have that $C \in \V_{1 \oplus' 2}$ and in turn because $C \notin \U_3 \cup V_3$ that $C \in V''$.  If $C \in \V_3$ and $C \notin \U_2 \cup V_2$ then because $C \notin \U_1 \cup \V_1$ we have that $C \notin \U_{1 \oplus' 2} \cup V_{1 \oplus' 2}$, so combined with the fact that $C \in \V_3$ we get that $C \in V''$.  Finally, we turn to the case where $C \in \V_2$ and $M_2 \succeq_C M_3$ (the case where $C \in V_3$ and $M_3 \succeq_C M_2$ has a parallel argument).  Because $C \notin \U_1 \cup \V_1$ we have that $M_1 \oplus' M_2 \equiv_C M_2$ and $C \in \V_{1 \oplus' 2}$.    
}
	\end{enumerate}
}
%joe2: \end{commentout}
%joe2*: simpler proof.

For part (b), suppose that $M_1 \succeq_C M_2$.  
Then $C$ is exogenous in $M_1 \oplus' M_2$ iff $C$ is exogenous
%joe3
%in $M_1$.  Moreover, $\R^{M_1 \oplus' M_2}(C) = \R^{M_1}(C)$,
%$\F^{M_1 \oplus' M_2}(C) = \F^{M_1}(C)$ if $C \in \U_1$, and 
%$\G^{M_1 \oplus' M_2}(C) = \G^{M_1}(C)$, it immediately follows that
in $M_1$.  Moreover, $\R^{M_1 \oplus' M_2}(C) = \R_1(C)$,
%meir3: I think it should be endogenous to have \F be defined
%$\F^{M_1 \oplus' M_2}(C) = \F_1(C)$ if $C \in \U_1$, and 
$\F^{M_1 \oplus' M_2}(C) = \F_1(C)$ if $C \in \V_1$, and
%meir3
%$\G^{M_1 \oplus' M_2}(C) = \G_1(C)$, it immediately follows that
$\G^{M_1 \oplus' M_2}(C) = \G_1(C)$, so it immediately follows that
%meir3
%$M_1 \succeq M_2 \equiv_C M_1$.  A similar argument applies if $C \in
$M_1 \oplus' M_2 \equiv_C M_1$.  A similar argument applies if $C \in
(\U_1 \cup \V_1) - (\U_2 \cup \V_2)$.  

For part (c),
%joe3: reinstated (I accidentally deleted this)
observe that to show that
$M_1 \oplus' (M_2 \oplus' M_3) = (M_1 \oplus'
M_2) \oplus' M_3$, it suffices to show that 
$M_1 \oplus' (M_2 \oplus' M_3) \equiv_C (M_1 \oplus'
M_2) \oplus' M_3$ for all $C \in
(\U_1 \cup \V_1 \cup \U_2 \cup \V_2 \cup \U_3 \cup \V_3)$.  
We do this by considering, for each variable $C$, how many models it
appears in.

\commentout{
M_2$.  Thus, we must show that for all variables $C \in
(\U_3 \cup \V_3) \cap ((\U_1 \cup \V_1)  \cup (\U_2 \cup \V_2))$,
either $M_3 \succeq_C (M_1 \oplus' M_2)$ or $(M_1 \oplus'
M_2) \succeq_C M_3$.  We must consider a number of cases.
If $C \in ((\U_1 \cup \V_1)  \cap (\U_2 \cup \V_2))$ then, since $M_1$
and $M_2$ are compatible by assumption, we have either $M_1 \succeq_C
M_2$ or $M_2 \succeq_C M_1$.  Suppose without loss of generality that
$M_1 \succeq_C M_2$.  Then by part (b), $M_1 \oplus' M_2 \equiv_C
M_1$.  By Proposition~\ref{pro:antisymmetry}, we have $M_1 \oplus
M_2 \succeq_C M_1$ and $M_1 \succeq_C M_1 \oplus M_2$.  
Moreover, since $M_1$ and $M_3$ are compatible, either
$M_1 \succeq_C M_3$ or $M_3 \succeq M_1$.  By
Proposition~\ref{pro:preorder}, it follows that either $M_1 \oplus
M_2 \succeq_C M_3$ or $M_3 \succeq_C (M_1 \oplus M_2)$.  If
$C \in ((\U_1 \cup \V_1)  - (\U_2 \cup \V_2))$, again by part (b),
$M_1 \oplus' M_2 \equiv_C M_1$, and a similar argument applies.
Finally, if $C \in ((\U_2 \cup \V_2)  - (\U_1 \cup \V_1))$, again
by part (b), $M_1 \oplus' M_2 \equiv_C M_2$, and again a symmetric
argument applies.  It follows that $M_3$ is compatible with
$M_1 \oplus' M_2$.  A similar argument shows that $M_1$ is compatible
with $M_2 \oplus' M_3$.

Finally, to show that $M_1 \oplus' (M_2 \oplus' M_3) = (M_1 \oplus'
M_2) \oplus' M_3$, observe that it suffices to show that 
$M_1 \oplus' (M_2 \oplus' M_3) \equiv_C (M_1 \oplus'
M_2) \oplus' M_3$ for all $C \in
(\U_1 \cup \V_1 \cup \U_2 \cup \V_2 \cup \U_3 \cup \V_3)$.  
}
%joe2
%first consider the case where $C$ is in only one of the three models.
%joe4
%First consder the case where $C$ is in only one of the three models
First consider the case where $C$ is in only one of the three models
(i.e., $C \in U_i \cup \V_i$ for exactly one $i \in \{1,2,3\}$).
Assume without loss of generality that $C$ is in
$M_1$.  Then it follows almost immediately from our
definitions that $M_1 {\oplus'} (M_2 {\oplus'}
%joe3
%M_3) \equiv_C M_1$ and $(M_1 {\oplus'} M_2) {\oplus'}
M_3) \equiv_C M_1 \equiv_C (M_1 {\oplus'} M_2) {\oplus'}
%joe2
%M_3 \equiv_C M_1$, so we in turn have that $M_1
%joe3
%M_3 \equiv_C M_1$, so  $M_1
M_3$, so  $M_1
{\oplus'} (M_2 {\oplus'} M_3) \equiv_C (M_1 {\oplus'} M_2) {\oplus'} M_3$.
%meir3*: Okay, technically there's something sticky here
%as well.  It's essentially the same thing as I discuss
%in the proof of (c).  Just because 
%M_2 \oplus M_3 \equiv_C M_2 does not guarantee that 
% M_1 \oplus (M_2 \oplus M_3) \equiv_C M_1 \oplus M_2.
%In particular, there could be changes 'higher up' in
%M_2 \oplus M_3 tht change whether it can explain M_1.
%I think we can make the same argument as I made below in (c),
%based on compatibility and anti-symmetry, but I do think
%it would technically be necessary for the proof to be correct.
%joe4*: I agree that there's someting to be proved here, althogh I
%believe it's straightforward.  Please do add the details.
%meir4: rewriting this per above discussion
Similarly, in the case where $C$
is only in two models, assume without loss of
generality that $C$ is in $M_1$ and $M_2$.  
\commentout{
Then it
follows immediately that $M_2 {\oplus'} M_3 \equiv_C
%joe2: shortened
%M_2$ and so we can see that both $M_1 {\oplus'} (M_2
%		{\oplus'} M_2) {\oplus'} M_3 \equiv_C M_1 {\oplus'}
%		M_2$, giving us that $M_1 {\oplus'} (M_2 {\oplus'}
%		M_3) \equiv_C (M_1 {\oplus'} M_2) {\oplus'} M_3$. 
M_2$, and $(M_1 {\oplus'} M_2) \oplus' M_3 \equiv_C
		M_1 {\oplus'} M_2$, so $M_1 \oplus' (M_2
		{\oplus'} M_3) \equiv_C M_1 \oplus' M_2 \equiv_C 
		(M_1 {\oplus'} M_2) {\oplus'} M_3$. 
}
Then it follows immediately that $(M_1 {\oplus'} M_2) \oplus' M_3 \equiv_C
M_1 {\oplus'} M_2$, so if $M_1 \succeq_C M_2$ then $(M_1 {\oplus'}
		M_2) \oplus' M_3 \equiv_C M_1$ and if $M_2 \succeq_C
		M_1$ then $(M_1 {\oplus'} M_2) \oplus' M_3 \equiv_C
		M_2$.  It is also immediate that $M_2 \oplus'
		M_3 \equiv_C M_2$, so if $M_1 \succeq_C M_2$ then
%joe5
%$M_1 \oplus' (M_2 \oplus' M_3) \equiv_C M_1$.  So
$M_1 \oplus' (M_2 \oplus' M_3) \equiv_C M_1$.  Now
		consider the case where $M_2 \succeq_C M_1$.   
It must be the case that either $M_2 \oplus' M_3 \succeq_C M_1$ or $M_1 \succeq_C M_2 \oplus' M_3$ because they are compatible.
If $M_2 \oplus' M_3 \succeq_C M_1$ then $M_1 \oplus' (M_2 \oplus'
		M_3) \equiv_C M_2$ and we are done.  On the other
		hand, if $M_1 \succeq_C M_2 \oplus' M_3$ then, because
		$M_2 \oplus' M_3 \equiv_C M_2$, we know that
		$M_1 \succeq_C M_2$.  But then because we assumed
		$M_2 \succeq_C M_1$ we get by
		Proposition~\ref{pro:antisymmetry} that $M_2 \equiv_C
		M_1$ and so $M_1 \oplus' (M_2 \oplus' M_3) \equiv_C
%joe5
%M_1 \equiv_C M_2$.  Thus in any situation we have that
M_1 \equiv_C M_2$.  Thus, in al cases, we have that
		$(M_1 {\oplus'} M_2) \oplus' M_3 \equiv_C M_1 \oplus'
		(M_2 \oplus' M_3)$. 

%meir3*: I'm not sure I'm convinced by this proof.  What we have from 
%part (b) is that M_1 \oplus' M_2 \equiv_C M_1.  And we know that
%M_1 \succeq_C M_3.  But I'm not seeing how this gives 
% (M_1 \oplus' M_2) \oplus' M_3 \equiv_C M_1.
%In particular, off the top of my head I only know that if
% M_i >_C M_j and M_j \equiv M_k then M_I >_C M_k.  But I don't
% know that M_i >_C M_j and M_i \equiv M_k implies M_k >_C M_j
% because there could be differences 'higher up' in the model.
% Do you have some obvious reason that (M_1 \oplus' M_2) >_C M_3?
% Or am I missing something here?
%meir3b: Actually, I think I see how to make this proof work.
%But I didn't see how it easily followed, so I'd like to 
%add more detail.
%joe4*: by all means :-)
\commentout{
Finally, if $C$ is in all three models, by
Propositions~\ref{pro:preorder} and \ref{pro:antisymmetry},
%joe3
%that there is for some choice of $i$, $j$, $k$ such that 
for some choice of $i$, $j$, $k$ we have
$M_i \succeq_C M_j \succeq M_k$.  Suppose that $M_1 \succeq_C
M_2 \succeq_C M_3$ (the argument is almost identical in all other
cases).  
It easily follows from part (b) and Proposition~\ref{pro:antisymmetry}
that $(M_1 \oplus' M_2) \oplus' M_3 \equiv_C M_1 \equiv_C M_1 \oplus'
(M_2 \oplus' M_3)$.  This completes the argument.
}

%meir3b: this is my replacement for the above proof
Finally, if $C$ is in all three models, by
Propositions~\ref{pro:preorder} and \ref{pro:antisymmetry}, 
for some choice of $i$, $j$, $k$ we have
$M_i \succeq_C M_j \succeq M_k$.  Suppose that $M_1 \succeq_C
M_2 \succeq_C M_3$ (the argument is almost identical in all other
cases).  
It follows from part (b) that $M_1 \oplus' M_2 \equiv_C M_1$.  Because
$M_3$ is compatible with $(M_1 \oplus' M_2)$, we know that either
$(M_1 \oplus' M_2) \succeq_C M_3$ or $M_3 \succeq_C (M_1 \oplus'
M_2)$.  In the first case, it follows immediately from part (b) that
%joe4
%$(M_1 \oplus' M_2) \oplus' M_3 \equiv_C M_1$.  In the case where
%$M_3 \succeq_C (M_1 \oplus' M_2)$, we already know that $(M_1 \oplus'
%M_2) \equiv_C M_1$, so we get that $M_3 \succeq_C M_1$.  But we
%already had that $M_1 \succeq_C M_3$ by
$(M_1 \oplus' M_2) \oplus' M_3 \equiv_C M_1$.  In the second case, since
$(M_1 \oplus' M_2) \equiv_C M_1$ by part (b) and $M_3 \succeq_C
(M_1 \oplus' M_2)$ by assumption, it follows that $M_3 \succeq_C M_1$.
And since $M_1 \succeq_C M_2 \succeq_C M_3$, we have that 
$M_1 \succeq_C M_3$ by transitivity (Proposition~\ref{pro:preorder}),
so it follows from 
Proposition~\ref{pro:antisymmetry} that $M_3 \equiv_C M_1$.  But then
from part (b) we have that $(M_1 \oplus' M_2) \oplus' M_3 \equiv_C
M_3 \equiv_c M_1$. 
It is easy to show by similar reasoning that $M_1 \oplus' (M_2 \oplus'
M_3) \equiv_C M_1$.  So we get that $M_1 \oplus' (M_2 \oplus'
M_3) \equiv_C (M_1 \oplus' M_2) \oplus' M_3$, completing the
argument. 
%joe4
%
\end{proof}

One natural question to ask is whether this definition of combination
is guaranteed to preserve acyclicity.  Unfortunately,
%joe2
%the answer to this question is no; we can provide simple acyclic models
%that when combined under this definition are cyclic.  For instance,
%consider two models each containing variables $A$, $B$, $C$, and $D$.  Let
%$U_1=\{A,C\}$, $V_1 = \{B,D\}$, $U_2 = \{B,D\}$ and $V_2 = \{A,C\}$.
%Let the range of all the variables be the same.  In the first model
%let $A$ be the parent of $B$ and $C$ be the parent of $D$, and in the
%second model let $B$ be the parent of $C$ and $D$ be the parent of
%$A$.  Both of these models are depicted in
%it is not.  Consider two models $M_1$ and $M_2$, each containing the
%joe3
this is not the case, as the following example shows.
\begin{example}
{\rm 
%joe3
%Consider the models $M_1$ and $M_2$ in Figure \ref{fig:cyclicmodelsexample1}.
%variables $A$, $B$, $C$, and $D$, such that
Consider the models $M_1$ and $M_2$ in Figure \ref{fig:cyclicmodelsexample1},
where
%joe3: itemized
\begin{itemize}
\item $\U_1=\{A,C\}$, $\V_1 = \{B,D\}$, $\U_2 = \{B,D\}$, and $\V_2 = \{A,C\}$;
%meir3
%\item all variables are binary (so have range $\{0,1\}$);
\item all variables are binary (i.e.\ have range $\{0,1\}$);
\item $\G_1(C) = \G_1(A) = \G_2(B) = \G_2(D) = \emptyset$,
$\G_1(B) = \{A\}$, $\G_1(D) = \{C\}$, $\G_2(A) = \{D\}$, and $\G_2(C)
= \{B\}$;
\item 
in $M_1$,
%joe3
%$A$ is the parent of $B$ and $C$ be the parent of $D$, and in $M_2$
$A$ is the parent of $B$ and $C$ is the parent of $D$, and in $M_2$,
$B$ is the parent of $C$ and $D$ is the parent of $A$.
%joe3: moved up
The details of the equations do not matter;
for simplicity, suppose that in $M_1$ we have $B=A$ and $D=C$,
while in $M_2$ we have $A=D$ and $C=B$.
\end{itemize}
%joe2: slowing down.  We need to say more about the \G_i relation
Thus, $A$ and $C$ are exogenous in $M_1$, while $B$ and $D$ are
exogenous in $M_2$.  
%joe2*: Meir, again you have to say more about the focus relations,
%and discuss how the can-explain relation woks for each variable.
%This is important.  I wold also add the can-explain relation to the graph.
%joe3
%These models are depicted in
%Figure \ref{fig:cyclicmodelsexample1}.
It is easy to see that,
despite the fact that both models are acyclic, when we combine them we
get a cyclic model.
}
%joe3; misplaced
%Thus in instances where cyclic models are
%unacceptable it may be necessary to declare two models incompatible if
%the combined model is found to contain a cycle.}
%\textbf{REMEMBER TO REINSERT THIS IMAGE}
%joe2*: Moved the figure here.  Meir, can you send me this example?
%\begin{figure*}[t]
\begin{figure}[htb]
	\centering
%joe2
%\includegraphics[width=0.7\linewidth]{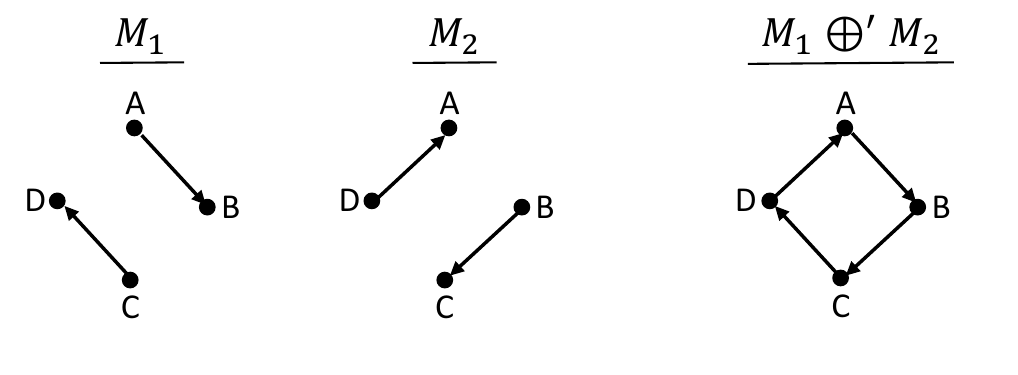}
\includegraphics[width=1.0\linewidth]{Cyclic_Models_example1}
%joe2
%\caption{Though the 2 models on the left are acyclic, the combined
%model on the right contains a cycle.} 
%joe3
%\caption{Although the 2 models on the left are acyclic, the combined
\caption{Although the models $M_1$ and $M_2$ are acyclic, the combined
model $M_1 \oplus M_2$ contains a cycle.} 
	\label{fig:cyclicmodelsexample1}
\end{figure}
\wbox
\end{example}

We can, however, provide a simple and efficient test to guarantee that
the combined model will be acyclic.  Let $G_1 =
%joe2
%(\mathcal{U}_1 \cup \mathcal{V}_1, E_1)$ be the (parent) graph for
(\mathcal{U}_1 \cup \mathcal{V}_1, E_1)$ be the parent graph for
model $M_1$ and let $G_2 = (\mathcal{U}_2 \cup \mathcal{V}_2, E_2)$ be
%joe2
%the (parent) graph for model $M_2$.  Let $G' =
the parent graph for model $M_2$.  Let $G' =
((\mathcal{U}_1 \cup \mathcal{V}_1)\cup(\mathcal{U}_2 \cup \mathcal{V}_2),
%joe2
%E_1 \cup E_2)$.  In linear time we can compute whether $G'$ contains
%any cycles, and if it does not then $M_1 {\oplus'} M_2$ is guaranteed
E_1 \cup E_2)$.  In linear time, we can compute whether $G'$ contains
any cycles.  If it does not, then $M_1 {\oplus'} M_2$ is guaranteed
to be acyclic.  This is a sufficient but not necessary condition for
%joe2
%acyclicity as edges can be deleted via our combination process.  In
acyclicity, as edges can be deleted via our combination process.  In
practice, though, we suspect this condition will hold in most cases of
interest where the combined model is indeed acyclic. 

%joe2
%\subsection{An Illustrative Example}

%Now that our basic notion of compatibility is defined it is worth
%considering an example that illustrates what this definition
%accomplishes. 
%
%To do so, we return to our example from the introduction. 

%joe2
%\subsection{Combination as Least Upper Bound}
\subsection{Combination as least upper bound}

%joe2
%When we combine two models we would like to know that the combined
%joe3: our knowledge is irrelevant
%When we combine two models, we would like to know that the combined
%model we get is in fact the simplest model that can explain both.  
When we combine two models, we would like the combined
model to be the simplest model that can explain both.  
%joe3: added glue
%We now show that, in a precise sense, this is in fact the case if such a
%model exists.
Unfortunately, this may not be the case for
$M_1 \oplus' M_2$.  Indeed, even if $M_1$ and $M_2$ are compatible,
$M_1 \oplus' M_2$ may not be able to 
explain both $M_i$ for all variables $C$ that appear in $M_i$.  It
follows from Proposition~\ref{pro:oplusprop} that if $M_i \succeq_C
%meir3: I'm not 100% certain, but I think this is what was intended
%M_{2-i}$ or $C \in (\U_1 \cup \V_i) - (\U_2 \cup \V_2)$, then
%$M_1 \oplus' M_2 \equiv_C M_i$, so (by
%Proposition~\ref{pro:antisymmetry}, $M_1 \oplus' M_2$ can explain
%$M_i$ with respect to $C$).  But, as the following example shows, if
M_{3-i}$ or $C \in (\U_i \cup \V_i) - (\U_{3-i} \cup \V_{3-i})$, then
$M_1 \oplus' M_2 \equiv_C M_i$, so (by
Proposition~\ref{pro:antisymmetry}) $M_1 \oplus' M_2$ can explain
$M_i$ with respect to $C$.  But, as the following example shows, if
$M_1 \succeq_C M_2$ and $C$ appears in $M_2$, $M_1 \oplus' M_2$ may
not be able to explain $M_2$ with respect to $C$.

%joe3: moved example here
%Consider the models depicted in
%Figure \ref{fig:nolubexample1}.  Let the range of all the variables be
\begin{example}\label{ex:notlub}
{\rm Consider the models $M_1$ and $M_2$ depicted in
Figure \ref{fig:nolubexample1}, where the range of all variables is
$\{\mathsf{true},\mathsf{false}\}$;
%joe2*: you need to say someting about the \G relation, Meir.
%joe2
%For the structural equations, let $\mathcal{F}_1(C) =
%A\; \mathsf{XOR}\; 
%B, \mathcal{F}_2(A) = \neg D, \textrm{ and } \mathcal{F}_2(B) = D$.
%Then in the combined model we get that $\mathcal{F}(A)
%meir2: fixed typo
%In $M_2$, let the structural equations be such that $C =
%meir4: fixing typo, per review
%the focus set of each variable consists just its parents, as defined in
the focus set of each variable consists of just its parents, as defined in
the parent graph; 
%joe3
%In $M_1$, let the structural equations be such that $C =
in $M_1$, the structural equations are such that $C =
%joe3
%A \, XOR \, B$, while in $M_2$,
A \, \mathsf{XOR} \, B$, while in $M_2$,
%joe3: simplified
%$A = \neg D$  and $ B = D$.
$A = D$  and $ B = D$.
Then in $M_1 \oplus' M_2$, all three of these equations hold.
}
%$\mathcal{F}(A)
%= \neg{D}, \mathcal{F}(B) = D, \textrm{ and } \mathcal{F}(C) =
%A\; \mathsf{XOR}\; B$.
%joe2
%\begin{figure*}[t]
\begin{figure}[htb]
	\centering
%joe2
%\includegraphics[width=0.7\linewidth]{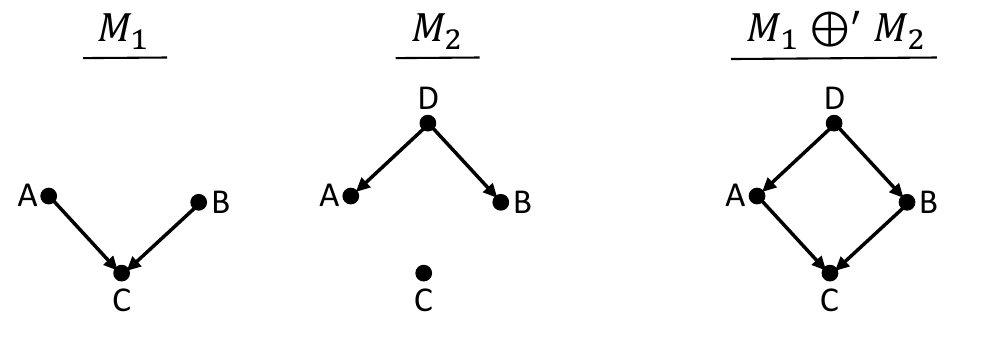}
\includegraphics[width=1.0\linewidth]{No_LUB_example1}
%joe3
%\caption{For Models $M_1$ and $M_2$ depicted, $M_1 \oplus'
%M_2 \not\succeq_C M_2$.}
\caption{Models $M_1$ and $M_2$ where $M_1 \oplus' M_2 \,
{\not\succeq}_C \, M_2$.}
	\label{fig:nolubexample1}
%joe2
%\end{figure*}
\end{figure}

{\rm It is easy to see that $M_1 \oplus' M_2 \, {\not\succeq}_C \, M_2$.
The problem is, to explain the value $C=0$, $A$ and $B$ need to have
different values, and there is no context in
$M_1 \oplus' M_2$ that gives them different values.
Intuitively, although 
$M_2$ can explain $M_1$ wth respect to each of $A$ and $B$
individually, it cannot explain them both together.  In particular,
the setting $A = \mathsf{false}$ and $B = \mathsf{true}$ cannot be
%joe3
%explained in $M_2$.  Formally, we have not defined what it what
explained in $M_2$.  We have not defined what 
it would mean to explain a setting involving more than one variable;
%meir3
%this is because our intuition for ``can explain'' is based in the
this is because our intuition for ``can explain'' is based on the
assumption that experts are testing one variable at a time.

This example also shows that $\succeq_C$ is not necessarily
transitive: we have $M_1 \oplus' M_2 \succeq_C M_1$ and $M_1 \succeq_C
M_2$, we do not have $M_1 \oplus' M_2 \succeq_C M_2$.  This does not
contradict Proposition~\ref{pro:preorder}, since $M_1 \oplus' M_2$ and
$M_2$ are not compatible. }
%meir3: I'm not actually certain what this is being used to denote here.
%Can you explain your reason for including this to me at some point?
%joe4: Sometimes examples are written using italics (just like theorem
%statements), but I really don't like that.  If you use italics, it's
%easy to see when the example ends, since we switch back to Roman
%ont.  Since we're not using italics, I find it useful to
%have an ``end of example'' marker (which plays the same role as the
%``end of proof'' marker).    Hence the \wbox.
\wbox
\end{example}

%joe3*
The fact that $M_1 \oplus' M_2$ may not be able to explain both $M_1$
and $M_2$ is somewhat disconcerting.  However, the situation is not quite 
as bad as it appears.  

%joe2
%Let us start by defining the $\succeq$ relation over causal models
%with focus.  Given two models $M_1 =
%((\mathcal{U}_1,\mathcal{V}_1,\mathcal{R}_1),\mathcal{F}_1,\mathcal{G}_1)$
%and $M_2 =
%((\mathcal{U}_2,\mathcal{V}_2,\mathcal{R}_2),\mathcal{F}_2,\mathcal{G}_2)$,
%we say that $M_1 \succeq M_2$ if $M_1 \succeq_C M_2$ for all
\begin{definition}\label{dfn:dominance}
$M_1$ \emph{dominates} $M_2$, written $M_1 \succeq M_2$, if 
$M_1 \succeq_C M_2$ for all $C \in \mathcal{U}_2 \cup \mathcal{V}_2$.
\end{definition}
%joe2
%Note that for $M_1 \succeq M_2$ to be the case it must be the case that
Note that if $M_1 $ dominates $ M_2$, then we must have that
$\mathcal{U}_1 \cup \mathcal{V}_1 \supseteq \mathcal{U}_2 \cup \mathcal{V}_2$. 

%joe2
%\begin{claim}
\begin{theorem}\label{thm:lub}
%If $M_1$ and $M_2$ are compatible then either $M_1 {\oplus'} M_2$ is
If $M_1$ and $M_2$ are compatible, then
%joe3
%either $M_1 {\oplus'} M_2$ is
%the least upper bound of $\{M_1, M_2\}$ relative to $\succeq$ or there
%is no unique least upper bound.
%\begin{itemize}
%\item[(a)]
$M_1 \oplus' M_2$ dominates both $M_1$ and $M_2$ iff
$M_1 \oplus' M_2$ is the unique least upper bound of $\{M_1,M_2\}$.
%joe3*: I'm almost certain that this is true, but I'd rather not claim
%it now.
%\item[(b)] If $M_1 \oplus' M_2$ does not dominate both $M_1$ and
%$M_2$, then $M_1$ and $M_2$ does not have a least upper bound.
%\end{itemize}
\end{theorem}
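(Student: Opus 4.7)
My plan is to split the ``iff'' into two directions. The backward direction is essentially immediate: if $M_1 \oplus' M_2$ is the unique least upper bound of $\{M_1, M_2\}$ under the dominance ordering $\succeq$, then in particular it is an upper bound, which by definition of $\succeq$ means it dominates both $M_1$ and $M_2$.

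For the forward direction, I will assume $M_1 \oplus' M_2$ dominates both $M_1$ and $M_2$ and verify the three conditions needed to be the unique least upper bound. Being an upper bound is exactly the assumption. For the ``least'' property, I will take an arbitrary upper bound $M^*$ of $\{M_1, M_2\}$ and show $M^* \succeq M_1 \oplus' M_2$. Fix a variable $C$ in $M_1 \oplus' M_2$. By the definition of the combination, either $C$ lies in only one of the two models, say $M_i$, or $C$ lies in both and some $M_i \succeq_C M_{3-i}$; in either case, Proposition~\ref{pro:oplusprop}(b) gives $M_1 \oplus' M_2 \equiv_C M_i$. Since $M^* \succeq M_i$, we have $M^* \succeq_C M_i$. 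Unpacking the definitions of $\succeq_C$ and $\equiv_C$, the agreement of ranges, focus sets, and (for endogenous $C$) structural equations at $C$ between $M_1 \oplus' M_2$ and $M_i$ transfers $M^* \succeq_C M_i$ into $M^* \succeq_C M_1 \oplus' M_2$. Ranging over all $C$ in $M_1 \oplus' M_2$ then yields $M^* \succeq M_1 \oplus' M_2$.

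For uniqueness, I will suppose $M^{**}$ is another least upper bound of $\{M_1, M_2\}$. Applying the ``least'' property of each to the other gives $M^{**} \succeq M_1 \oplus' M_2$ and $M_1 \oplus' M_2 \succeq M^{**}$, which in particular forces the two models to have the same variables. Applying Proposition~\ref{pro:antisymmetry} at every shared variable yields $M^{**} \equiv_C M_1 \oplus' M_2$ for all $C$, so $M^{**}$ and $M_1 \oplus' M_2$ are identical as causal models with focus.

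The step I expect to be the main obstacle is the ``least'' step, specifically the argument that $M^* \succeq_C M_i$ together with $M_1 \oplus' M_2 \equiv_C M_i$ yields $M^* \succeq_C M_1 \oplus' M_2$. This is a short but delicate unfolding of definitions: $\succeq_C$ is not in general transitive, as Example~\ref{ex:notlub} illustrates, so I cannot simply invoke Proposition~\ref{pro:preorder}. Instead, I need to exploit the fact that $\equiv_C$ is strong enough, matching ranges, focus sets, and structural equations at $C$, to transport the ``can-explain'' clause directly from $M_i$ to $M_1 \oplus' M_2$ without appealing to transitivity.
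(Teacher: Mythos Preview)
Your proposal is correct and follows essentially the same route as the paper's proof: both directions are handled as you describe, the ``least'' step picks for each variable $C$ the index $i$ with $M_1 \oplus' M_2 \equiv_C M_i$ and lifts $M^* \succeq_C M_i$ to $M^* \succeq_C M_1 \oplus' M_2$, and uniqueness is obtained via Proposition~\ref{pro:antisymmetry}. The only cosmetic difference is that you invoke Proposition~\ref{pro:oplusprop}(b) to package the $\equiv_C$ fact, whereas the paper unpacks the definition of $\oplus'$ in place (recording that $\G^{M_1 \oplus' M_2}(C) = \G_i(C)$, $Par^{M_1 \oplus' M_2}(C) = Par_i(C)$, and $\F^{M_1 \oplus' M_2}(C) = \F_i(C)$) before running the same transfer argument.
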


\begin{proof}
%joe2: 
%First let us consider the case where $M_1 {\oplus'} M_2 \succeq M_1$
%and $M_1 {\oplus'} M_2 \succeq M_2$.  In that case by definition $M_1
%joe3
%First consider the case where
Suppose that
$M_1 {\oplus'} M_2 \succeq M_1$
and $M_1 {\oplus'} M_2 \succeq M_2$.  Then, by definition, $M_1
{\oplus'} M_2$ is an upper bound of $\{M_1,M_2\}$, so now we must show
%joe2
%that for any other upper bound $M'$ it must be the case that
that, for any other upper bound $M'$ of $\{M_1,M_2\}$, we have 
$M' \succeq M_1 {\oplus'} M_2$.  We first note that the variables in
$M_1 {\oplus'} M_2$ are precisely
$(\mathcal{U}_1 \cup \mathcal{V}_1)\cup(\mathcal{U}_2\cup\mathcal{V}_2)$.
%joe2:rewrote proof  (and simplified presentation)
\commentout{
Similarly $M'$ must contain at least variables
$(\mathcal{U}_1 \cup \mathcal{V}_1)\cup(\mathcal{U}_2\cup\mathcal{V}_2)$
if $M' \succeq M_1$ and $M' \succeq M_2$.
Now consider any $C$ in
$M_1 {\oplus'} M_2$.  Let $\mathcal{G}_{{\oplus'}}(C)$ be the focus
set of $C$ in $M_1 {\oplus'} M_2$ and similarly let
$\mathcal{F}_{{\oplus'}}(C)$ be the structural equation.  By our
definition of combination it must be the case that 
$\mathcal{G}_{{\oplus'}}(C) = \mathcal{G}_{2}(C)$, and because
$M' \succeq M_1$ and $M' \succeq M_2$ it must be the case that
$\mathcal{G}_{1}(C) \subseteq \mathcal{G}'(C)$ and
$\mathcal{G}_{2}(C) \subseteq \mathcal{G}'(C)$, so we have that
$\mathcal{G}_{{\oplus'}}(C) \subseteq \mathcal{G}'(C)$.
Similarly we
know that $\mathcal{F}_{{\oplus'}}(C) = \mathcal{F}_{1}(C)$ or
$\mathcal{F}_{{\oplus'}}(C) = \mathcal{F}_{2}(C)$ (or $C$ is exogenous
in both $M_1 {\oplus'} M_2$ and one of the models and the range of $C$
in $M_1 {\oplus'} M_2$ is the same as that model).  But then because
for either $i=1$ or $i=2$ it must be the case that
$\mathcal{G}_{{\oplus'}}(C) = \mathcal{G}_i(C)$ and
$\mathcal{F}_{{\oplus'}}(C) = \mathcal{F}_i(C)$, it follows
immediately that for every intervention $\mathcal{G}_{{\oplus'}}(C)
= \vec{x}$ on $\mathcal{G}_{{\oplus'}}(C)$ in $M_1 {\oplus'} M_2$ it
is also an intervention on $\mathcal{G}_{i}(C)$ such that if
$(M_1 \oplus' M_2,\vec{u})\vDash
[\mathcal{G}_{{\oplus'}}(C) \leftarrow \vec{x}] C=c$ then
$(M_i,\vec{u_i})\vDash [\mathcal{G}_{{\oplus'}}(C) \leftarrow \vec{x}]
C=c$ for all settings of exogenous variables $\vec{u_i}$ in model
$M_i$.  But because $M' \succeq_C M_i$ it must be the case that the
same values can be achieved in $M'$ under the same interventions, so
$M' \succeq_C M_{1} {\oplus'} M_2$.  And because this is true for all
$C$, we get that $M' \succeq M_{1} {\oplus'} M_2$.
}
%joe2: \end{commentout}
For each variable $C$ in $M_1 {\oplus'} M_2$, there exists some
$i \in \{1,2\}$ such that 
$\mathcal{G}^{{M_1 \oplus'} M_2}(C) = \mathcal{G}_{i}(C)$
 and either $\mathcal{F}^{M_1 {\oplus'} M_2}(C)
= \mathcal{F}_{i}(C)$ or $C$ is exogenous in both $M_1 \oplus' M_2$
and $M_i$.
%joe2*: added; this is important, I think
Moreover, $Par^{M_1\oplus' M_2}(C) = Par_i(C)$.
Thus, given an intervention $\G^{M_1 \oplus' M_2}(C) = \vec{x}$ in
 $M_1 \oplus' M_2$ and context $\vec{u}$ such that $(M_1 \oplus'
 M_2,\vec{u}) \models [\G^{M_1 \oplus' M_2}(C) = \vec{x}](C=c)$, there
 exists a context $\vec{u}'$ in $M_i$ such that
 $(M_i,\vec{u}') \models [\G^{M_1 \oplus' M_2}(C) = \vec{x}](C=c)$.
Since $M' \succeq M_i$, it follows that there exists a context
 $\vec{u}''$ in $M'$ such that
  $(M',\vec{u}'') \models [\G^{M_1 \oplus' M_2}(C) = \vec{x}](C=c)$.
  It follows that $M' \succeq_C M_1 \oplus' M_2$.  Since $C$ was
%joe3
%arbitrary, it follows that $M' \succeq_C M_1 \oplus' M_2$.
arbitrary, it follows that $M' \succeq M_1 \oplus' M_2$.

%joe3
We have thus shown that $M_1 \oplus' M_2$ is a least upper bound of
$\{M_1,M_2\}$ if $M_1 \oplus' M_2 \succeq M_1$ and $M_1 \oplus'
M_2 \succeq M_2$.
Uniqueness is straightforward: if $M'$ is another least upper
bound of $\{M_1,M_2\}$ then, by 
Proposition~\ref{pro:antisymmetry}, it follows that 
$M' \equiv_C M_1 \oplus' M_2$ for all
$C \in \U_1 \cup \V_1 \cup \U_2 \cup \V_2$, so
$M' = M_1 \oplus' M_2$.
%meir3: it took me a second to process that this wasn't the 
%converse of the uniqueness statement we just made.  I'm not 
%sure how to improve that, though. Also, I'm thinking this
%should be a colon, though I'm not certain.
%joe4: Both a colon and period are OK, but you're right that a colon
%in this case may minimize confusion.
%The converse is also immediate.  If
The converse is also immediate: if
$M_1 \oplus M_2$ is not an upper bound of both $M_1$ and $M_2$, it
certainly cannot be a least upper bound of $\{M_1,M_2\}$.
\end{proof}

%joe3*: cut this for now, but see comment below
\commentout{
%joe2
%So now let us turn to the case where $M_1 {\oplus'} M_2 \not\succeq
%M_i$ for one of the $M_i \in \{M_1,M_2\}$.  As above, let
%$\mathcal{G}_{{\oplus'}}$ be the focus function of $M_1 {\oplus'} M_2$
%and similarly let $\mathcal{F}_{{\oplus'}}$ be the function mapping to
%structural equations.
So now suppose that $M_1 {\oplus'} M_2 \not\succeq
M_i$ for some $i \in \{1,2\}$.  
Assume by way of contradiction that there exists
a model $M'$ that is the unique least upper bound of $M_1$ and $M_2$.
%joe2*: why does C' have to be endogenous in both models?
%There must exist some $C'$ such that
%$\mathcal{F}'(C') \neq \mathcal{F}_{{\oplus'}}(C')$.
%We begin by showing that $Par'(C') \neq Par_{{\oplus'}}(C')$.  Assume
There must be some variable $C'$ in $M_1 \oplus' M_2$ such that either $C'$
is exogenous in one of $M_1 \oplus' M_2$ and $M'$ but not in the
other, or $C'$ is endogenous in both models and
%meir1: fixed typo as discussed
$\mathcal{F}'(C') \neq \mathcal{F}^{M_1
	{\oplus'}M_2}(C')$.  
We show that, in either case, $Par^{M'}(C') \ne \G^{M_1
{\oplus'}M_2}(C')$.  This is immediate 
if $C'$ is exogenous in $M'$, so assume that $C'$ is endogenous in $M'$.
Further assume, without loss 
of generality, that $M_1 \succeq_{C'} M_2$ (they were compatible, so
one must dominate the other), so
%joe2: using notation
%$\mathcal{G}_{{\oplus'}}(C') = \mathcal{G}_1(C')$ and
%$Par_{{\oplus'}}(C') = Par_{1}(C')$.  We know that
$\mathcal{G}^{M_1 {\oplus'} M_2}(C') = \mathcal{G}_1(C')$ and
$Par^{M_1 \oplus' M_2}(C') = Par_{1}(C')$.
%joe2: trying to clarify the logic
%$Par_1(C') \subseteq \mathcal{G}_1(C')$, so if $Par^{M'}(C') =
%Par_1(C')$ but $\mathcal{F}^{M'}(C') \neq \mathcal{F}_{1}(C')$, 
Since $Par_1(C') \subseteq \mathcal{G}_1(C')$ and, by assumption,
either $C'$ is exogenous in $M_1$ or 
$\mathcal{F}^{M'}(C') \neq \mathcal{F}_{1}(C')$, if $Par^{M'}(C') \subseteq
\G_1(C')$, there
must be some intervention $\mathcal{G}_1(C') = \vec{y}$
%joe2
%on $\mathcal{G}_1(C')$
and values $c \ne c' \in \R(C')$ 
such that $(M',\vec{u}) \vDash
%joe2
%[\mathcal{G}_1(C') \leftarrow \vec{y}] C=c$ and, for any exogenous
%setting $\vec{u_1}$ in model $M_1$, $(M_1,\vec{u_1}) \vDash
[\mathcal{G}_1(C') \leftarrow \vec{y}] (C=c)$ for all contexts
$\vec{u}$ in $M'$ and a context 
$\vec{u_1}$ in $M_1$ such that $(M_1,\vec{u_1}) \vDash
%joe2
%[\mathcal{G}_1(C') \leftarrow \vec{y}] (C=c')$. So we
%have that $M' \not\succeq_{C'} M_1$ and so $M'$ is not an upper bound.
[\mathcal{G}_1(C') \leftarrow \vec{y}] (C=c')$. So 
$M' \not\succeq_{C'} M_1$, contradicting the assumption that
$M' \succeq M_1$.
%joe2
%There must therefore be some $D \in Par^{M'}(C')$ such that $D \notin
%Par^{M_1 {\oplus'} M_2}(C')$.  But now we note that it cannot be the case that
%all such $D$ are in $\mathcal{G}_1(C')$.
%meir1: fixed period to comma, as discussed
Thus, there must be some variable $D \in Par^{M'}(C') - \mathcal{G}_1(C')$,
%joe2: now unnecessary
%For consider a situation
%where that is the case; then there exist two interventions on
%$\mathcal{G}_1(C')$ that differ only on $D$ and so give the same value
%of $C'$ in $M_1$ but different values in $M'$, contradicting the fact
%that $M' \succeq_C M_1$.  So we now know that there must be some
%$D \in Par'(C')$ such that $D \notin \mathcal{G}_1(C')$, and because
%$M_1 \succeq_{C'} M_2$ we have that
%$\mathcal{G}_2(C') \subseteq \mathcal{G}_1(C')$ and therefore
%$D \notin \mathcal{G}_2(C')$ as well.
as desired.
%But if $D \notin \mathcal{G}_1(C') \cup \mathcal{G}_2(C')$ then we can 

We can now
construct a new model $M''$  such that $M'' \succeq M_1$ and
%joe2
%$M'' \succeq M_2$ but $M'' \not\succeq M'$: create a new variable
% $D$, let $\mathcal{G}''(C')= \mathcal{G}'(C') \cup \{D'\}$, and
 $M'' \succeq M_2$ but $M'' \not\succeq M'$ as follows.
%joe3*: Meir, you have to fill in this case.  I'm pretty sure that if
%$C'$ is exogenous in M', then we can construct M'' so that 
%\G^{M''}(C')= \G_1(C'), \Par^{M''}(C') = \Par_1(C'), and
%$\F^{M''}(C')= \F_1(C')$.  Now we have to argue that M'' \succeq_C
%M_1, $M'' \uscceq_C M_2$, $M' \succ_C M''$, and M' \succ M''$.  If we
%can show this, then we're done with this case.
%If $C'$ is exogenous in $M'$ ... 
%joe3*: Even assuming that the above works, I'd still like to see more
%detail in this argument.
If $C'$ is endogenous in $M$, the let $M'$ be just like $M''$ except
that that is has an additional exogenous variable
$D' \notin \mathcal{U}'\cup\mathcal{V}'$ that has the same range as
%joe2*
%$D$, $\mathcal{G}^{M''}(C')= \mathcal{G}'(C') \cup \{D'\}$, and
$D$, $\mathcal{G}^{M''}(C')= \mathcal{G}'(C') \cup \{D'\} - \{D\}$, and
the structural equation for $C'$ in $M''$ is identical to that in $M'$
except that we replace all occurrences of $D$ by $D'$
%joe2*: why is it obvious that $M'' \succeq_C M_2$?
It is easy to check that $M'' \succeq_{C'} M_1$, $M'' \succeq_{C'}
M_2$, and $M'' \not\succeq_{C'} M'$, so 
%Thus we have that
$M'$ not the unique least upper bound of $\{M_1,M_2\}$,
giving a contradiction.
%joe2: unnecessary
%So in the case where $M_1 {\oplus'}
%M_2 \not\succeq M_i$ for one of the $M_i \in \{M_1,M_2\}$ there in
%fact is no least upper bound. 
\end{proof}
}

%joe2
%On the one hand this result is reassuring, as it implies that our
%joe3
\commentout{
On the one hand, this result is reassuring, as it implies that our
definition of $\oplus'$ captures the least upper bound if one exists.
On the other hand, there are cases where no least upper bound of $M_1$
and $M_2$ exists and $M_1 \oplus' M_2$ cannot even fully explain one
%joe2
%of the models.
of the models, as the following example shows.
}

%joe2
%So calling this model the combination of $M_1$ and $M_2$ seems like a
%mistake.
\commentout{
Thus if we want to combine two models $M_1$ and $M_2$ we can combine
them using the method above and then check whether $M_1 {\oplus'}
M_2 \succeq M_1$ and $M_1 {\oplus'} M_2 \succeq M_2$:  if both hold
then we know it is the ``simplest'' model that could explain both and
we call that model $M_1 \oplus M_2$. If either does not held then we
know that there is no such simplest model and $M_1 \oplus M_2$ is not
defined.  In the situations where there is no unique least upper bound
what this in fact means is that there is no way to tell how we ought
to explain both models and so more experiments are necessary. 
}

%joe3: added next sentence and made lots of changes to rest of paragraph
So where does this leave us?
Our goal is to combine the information of experts.  If a
decision-maker believes that models $M_1$ and $M_2$ both provide useful 
information, then she would want to work with a model that somehow
combines this information.  As Example~\ref{ex:notlub} shows, the
problem with $M_1 \oplus' M_2$ is that it does not necessarily combine
all the information in $M_1$ and $M_2$.
%joe2
%joe3: 
%Rather than trying to define a more complicated notion of
%compatibility that makes $M_1$ and $M_2$ incompatible, we instead
%meir4: rewording to avoid 2 "by"s, per reviewer 2
%We deal with this problem by simply 
%defining $\oplus$ by taking $M_1 \oplus M_2 = M_1 \oplus' M_2$ if
To deal with this problem, we simply 
define $\oplus$ by taking $M_1 \oplus M_2 = M_1 \oplus' M_2$ if
$M_1 \oplus' M_2 \succeq M_i$ for $i = 1,2$, and otherwise say that
$M_1$ and $M_2$ are incompatible and 
$M_1 \oplus M_2$ is undefined.  Intuitively, in the latter case,
there is no clear way to explain
both models,  so more experiments are necessary. 
It is easy to check
that Proposition~\ref{pro:oplusprop} holds for $\oplus$, with no
change in proof.  Moreover, by Proposition~\ref{thm:lub}, when it is
defined, $M_1 \oplus M_2$ is the least upper bound of $\{M_1,M_2\}$.

%joe3: added next two paragraphs.
We conjecture that if $M_1 \oplus M_2$ is not defined, then
$\{M_1,M_2\}$ in fact has no least upper bound.  This is the case in
the models of Example~\ref{ex:notlub}.  Consider the models $M_1'$ and
%meir3
 %$M_2$', where $M_1'$ is identical to $M_1$ except that it includes the
%joe4: what change was made here?
%meir4: I moved the ' into the math for M_2, so that it would render
%as prime rather than apostrophe.
$M_2'$, where $M_1'$ is identical to $M_1$ except that it includes the
variable $D$, and $\G^{M_1'}(A) = \G^{M_1'}(B) = \{D\}$, and $M_2'$ is
just like $M_2$ except that $\G^{M_2'}(C) = \{A,B\}$.  It is easy to
check that $M_1$ and $M_2$ are both upper bounds on $\{M_1,M_2\}$, and
there is no upper bound $M'$ of $\{M_1,M_2\}$ such that $M_1' \succeq M'$
and $M_2' \succeq M'$.

If this conjecture is 
correct (and we have shown that it is in a number of special cases),
then it shows that if we think of $\succeq$ as an information
ordering, then $M_1 \oplus M_2$, when it is defined, is the model that
combines the information in $M_1$ and $M_2$ and has no additional
information; if it is not defined, then there is no such model.%
\footnote{We remark that we can define an analogue of $\succeq$
for the notion of combination considered by ACH, and show that
$M_1 \oplus M_2$ as ACH define it is the least upper bound $M_1$ and
$M_2$ with respect to the ACH notion.  Thus, thinking in terms of
least upper bound seems like a useful way to think of combining models.}

%joe2: moved computational complexity; it breaks the flow a bit

%meir4*: reviewer 1 points out that the title is confusing in as much
%as it makes you expect computational complexity.  Not sure what to
%do about this.
%joe5: maybe this suggests a different notion 
%\subsection{Complexity of Explanation}
%meir5
%\subsection{The complexity of explanation and combination}
%joe6: no caps in subsection titles
%\subsection{Explanation Complexity and Combination Complexity}
\subsection{Explanation complexity and combination complexity}

%joe3: cut this paragraph and combined it with following one.  It
%seemed like overkill to have both
\commentout{
Our definitions thus far capture the notion that one
%joe2
%model \textit{can} explain another with respect to $C$.  Those
model can explain another with respect to $C$.  These
explanations, though, may in fact be quite complicated.  Intuitively,
we may be more reluctant to accept explanations that are
%joe3
%complicated -- in fact, if a highly complicated explanation is needed
%to reconcile two models we may instead prefer to simply declare them
complicated; if a highly complicated explanation is needed
to reconcile two models,c we may instead prefer to simply declare them
incompatible. 
%joe2
%As such, we now define notions of \textit{complexity of explanation}
%and \textit{complexity of combination}.
%meir5
%The following definitions of \textit{complexity of explanation}
%and \textit{complexity of combination} capture these intuitions.}
The following definitions of \textit{explanation complexity}
and \textit{combination complexity} capture these intuitions.}

%joe2
%Consider two models $M_1$ and $M_2$.  In our above definition we
%required that every observation in $M_2$ can be explained by some
%joe3: 
Recall that 
$M_1 \succeq_C M_2$ if, for every intervention
%meir3: making notation consistent:
%$\G_2(C) \gets \vec{x}$, value $c \in \R(C)$, and context
$\G_2(C) = \vec{x}$, value $c \in \R(C)$, and context
$\vec{u}_2$, there exists a context $\vec{u}_1$ such that
if $(M_2, \vec{u}_2) \vDash
[\mathcal{G}_2(C) \leftarrow \vec{x}]( C=c)$ then
$(M_1, \vec{u}_1) \vDash [\mathcal{G}_2(C) \leftarrow \vec{x}] (C=c)$. 
%joe3: cut
%Since $Par_2(C) \subseteq \G_2(C)$, 
%$\vec{u}_1$ depends only on the intervention
%$\mathcal{G}_2(C) = \vec{x}$, not $\vec{u}_2$.
However, in principle,
we could use a different context $\vec{u}_1$ to explain each
possible intervention on $\G_2(C)$.
%joe3: incorporating material from previous paragraph
We might be reluctant to accept explanations that are
complicated, in the sense of requiring too many different contexts; if
an overly complicated explanation is needed 
to reconcile two models, we may instead prefer to simply declare them
incompatible. 
%meir5
%The following definitions of \textit{complexity of explanation}
%and \textit{complexity of combination} capture these intuitions.
The following definitions of \textit{explanation complexity}
and \textit{combination complexity} capture these intuitions.

%joe2
%context in $M_1$ in order for $M_1$ to explain $M_2$ with respect to a
%given variable.  If a different context is needed to explain each
%intervention, though, then the explanation seems more complex than if
%it can all be explained via one context.
%We say that $M_1$ can explain $M_2$ with respect to $C$ via a set of
%contexts $\mathbb{U}$ if $M_1$ can explain $M_2$ with respect to $C$
%using only contexts $u'_1$ drawn from $\mathbb{U}$; that is to say,
%the definition is the same as for \textit{can explain} above, except
%that in condition b) there must now exist a setting
%$\vec{u}'_1 \in \mathbb{U}$ that meets the stated conditions.  The
%complexity of $M_1$'s ability to explain $M_2$ with respect to $C$ is
%the minimum cardinality of any set $\mathbb{U}$ such that $M_1$ can
%explain $M_2$ with respect to $C$ via $\mathbb{U}$.  We write
%$M_1 \succeq_C^{\nu} M_2$ if the complexity of $M_1$'s ability to
%explain $M_2$ with respect to $C$ is less than or equal to $\nu$. 
\begin{definition} $M_1$
\emph{can explain
$M_2$ with respect to $C$ using a set $\U'$ of contexts} if $M_1$
can explain $M_2$ with respect to $C$ using only contexts $u'_1$ drawn
from $\U_1$; that is, we just modify Definition~\ref{dfn:canexplain}
%meir3
%so that all the contexts $u_2$ in condition (c) are drawn from $\U'$.
so that all the contexts $u_1$ in condition (c) are drawn from $\U'$.
The \emph{complexity of $M_1$'s ability to explain $M_2$ with
respect to $C$} is $\min\{|\U'|:$ $M_1$ can explain $M_2$ with respect
to $C$ using $\U'\}$.   
%joe3: we shouldn't need this notation
%We write $M_1 \succeq_C^{\nu} M_2$ if the complexity of $M_1$'s ability
%to explain $M_2$ with respect to $C$ is less than or equal to $\nu$.
\end{definition}

%joe2
%To see this in practice, we may consider the example depicted in
%Figure \ref{fig:complexityexample1}.  In all of the models, let
\begin{example}
{\rm Consider the models in
Figure \ref{fig:complexityexample1}.  In all of these models, 
$\mathcal{R}(A) = \mathcal{R}(B) = \mathcal{R}(A_1) = \mathcal{R}(A_2)
= \mathcal{R}(A_3) = \{0,\dots,10\}$, $\mathcal{R}(D)
= \{0,\dots,30\}$, and $\mathcal{R}(C) = \{0,\dots,60\}$.
%joe2*: again you need to say something about \G in all these models.
In 
%joe2
%low complexity models, let $\mathcal{F}_1(C)$ be $A+B$ if $D\geq{1}$ and
%joe3
%$M_1$, we have the structural equations $C = A+B$ if
%meir4: making a whole bunch of changes because I update the image to M_3/4
%meir4
%the low-complexity model $M_1$ on the left, we have the structural
%joe5: can you send me the updated figure?
model $M_1$ on the left, we have the structural
equations $C = A+B$ if 
$D\geq{1}$ and 
%joe2
%$C= 2(A+B)$ if $D=0$.  Let $\mathcal{F}_2(C) = A+B$.  For the high
%complexity models, let $\mathcal{F}_1(C)=D$ and $\mathcal{F}_2(C) =
%A_1 + A_2 + A_3$.  In the low complexity models, the complexity of
$C= 2(A+B)$ if $D=0$;  in
%joe3
%$M_2$, we have  $C = A+B$; in $M_3$, we have 
%$C=D$; and in $M_4$, we have $C =
%meir4
%the low-complexity model $M_2$, we have  $C = A+B$; in the
model $M_2$, we have  $C = A+B$; in
%meir4
%the high-complexity $M_1$ on the right, we have 
model $M_3$ on the right, we have 
%meir4
%$C=D$; and in the high-complexity $M_2$, we have $C =
$C=D$; and in model $M_4$, we have $C =
A_1 + A_2 + A_3$.
%joe2
%In the low complexity models,
%joe3: reinstated
In the low-complexity models on the left,
the complexity of
$M_1$'s ability to explain $M_2$ with respect to $C$ is $1$, as every
intervention can be explained by the value of $D$ simply having been
$1$ the entire time.  For the high-complexity models on the right, though, the
complexity of $M_3$'s ability to explain $M_4$ with respect to $C$ is
%joe2
%$30$; for each intervention $D$ must have taken on precisely the
$30$; for each intervention, $D$ must take on precisely the
right value in $M_3$ for each particular outcome of $C$ to be observed.  Thus,
%joe2
%we would be more hesitant to accept the combined model as correct in
%joe3
we would be more hesitant to combine the high-complexity models $M_3$
%meir5
%and $M_3$.  Combining them 
and $M_4$.  Combining them 
%joe3
%$M_3$ as an explanation of $M_4$,
%and thus more hesitant to work with $M_3 \oplus M_4$, since that
implicitly assumes that $M_3$ and $M_4$ are compatible, and, in
particular, that $M_3$ can explain $M_4$ with respect to $C$.
%joe3
%the high-complexity $M_1$ as an explanation of $M_2$,
%and thus more hesitant to work with $M_1 \oplus M_2$ in the
%high-complexity setting, since that
%implicitly assumes that $M_3$ are compatible, and, in particular, that
%$M_3$ can explain $M_4$.
}
%joe2*: Meir, can you relabel the high-complexity M_1 and M_2 as M_3
%and M_4?  There's no reason to reuse the names M_1 and M_2.
%\begin{figure*}[t]
\begin{figure}[htb]
	\centering
%joe2
%\includegraphics[width=0.7\linewidth]{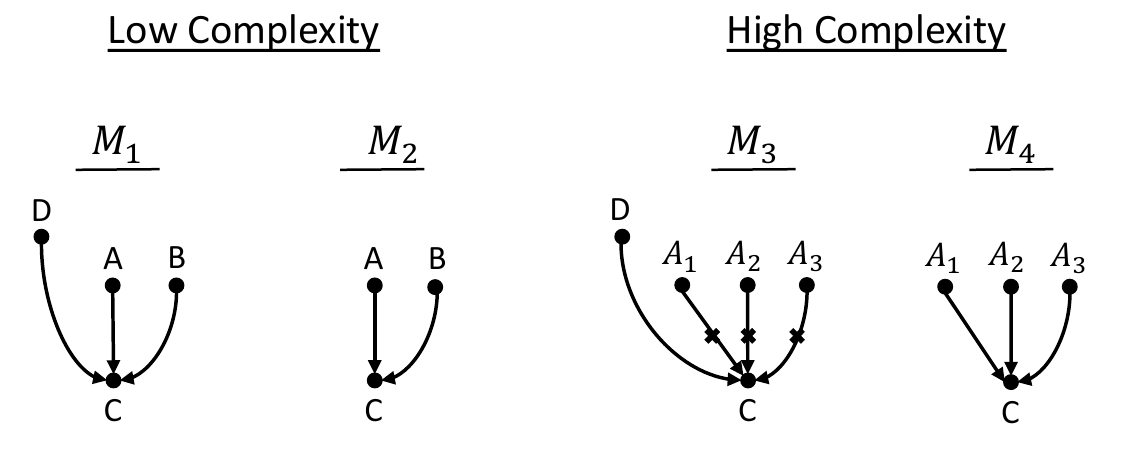}
\includegraphics[width=1.0\linewidth]{Complexity_example1}
%joe2
%\caption{The left two models have low \textit{complexity of
%joe2
%joe3: undid change since the labels weren't changed (but they should
%be in the next version!)
%\caption{The left two models on have low \textit{complexity of
%joe4*: Meir, can you redo this figure using M_3 and M_4 for the
%models on the left, and then rewrite the explanation to match
%meir5
%\caption{The two models on the left have low complexity of
%	explanation with respect to $C$ whereas the two on the right
\caption{The two models on the left have low explanation
	complexity with respect to $C$ whereas the two on the right
%joe3
%	have high \textit{complexity of explanation}.}
%	have high complexity of explanation.} 
%\caption{Models $M_1$ and $M_2$ have low complexity of
%	explanation with respect to $C$ whereas $M_3$ and $M_4$
%meir5
%	have high complexity of explanation.} 
	have high explanation complexity.} 
	\label{fig:complexityexample1}
%joe2
%\end{figure*}
\end{figure}
\wbox
\end{example}

%joe2
%We can similarly define complexity of combination.
%meir5
%We can extend the notion of complexity of explanation to the complexity of 
%combining two models.
We can extend the notion of explanation complexity to the combination 
complexity of two models.

\begin{definition}\label{dfn:complexity}
%meir5
%The \emph{complexity of
%combining two compatible models $M_1$ and $M_2$} is 
The \emph{combination 
	complexity of two compatible models $M_1$ and $M_2$} is 
%meir5: deleting a newline
%joe2
%the minimum cardinality of set $\mathbb{U}$ such that, for all $C \in
the minimum cardinality $|\U'|$ taken over all sets  $\U'$ such that,
for all $C \in
(\mathcal{U}_1 \cup \mathcal{V}_1)\cap(\mathcal{U}_2 \cup \mathcal{V}_2)$, 
%joe3
either
$M_1$ can explain $M_2$ with respect to $C$ using $\U'$ or $M_2$
can explain $M_1$ with respect to $C$ using $\U'$.
%joe2
%Similarly we say $M_1$ and $M_2$ are $\mu$-compatible if the complexity of
%combining them is less than or equal to $\mu$.  If the complexity of
%meir5
%$M_1$ and $M_2$ are \emph{$\mu$-compatible} if the complexity of
%combining $M_1$ and $M_2$ is less than or equal to $\mu$.
%meir6: removing \mu$-compatibility
%$M_1$ and $M_2$ are \emph{$\mu$-compatible} if the combination
%complexity of $M_1$ and $M_2$ is less than or equal to $\mu$.
%joe3: we shouldn't need this notation
%If the complexity of
%combining $M_1$ and $M_2$ is $\mu$, then we denote the combined
%model as $M_1 {\oplus'}^{\mu} M_2$.
\end{definition}

%joe2
%In a given use case one may want to require that all explanations have
%complexity less than or equal to some threshold or that all model
%combinations have complexity less than a threshold.  In the next
%section we also show how complexity of combination can be used to
%in settings where not all models under consideration are compatible. 
A decision-maker may want to consider only explanations that have
complexity less than or equal to some threshold or model combinations that have complexity less than a threshold.  In the next
%meir5
%section we show how complexity of combination can be used to
%joe6
%section we show how combination complexity can be used to
section, we show how combination complexity can be used to
weight models.

\section{Weighting and Combining Expert Opinions}
\label{sec:weight}

%joe2
%Given a collection of models it may be impossible to combine all of
%them but possible to combine a variety of different subsets of them.
%Alrajeh, Chockler, and Halpern (2018) (\textbf{remember to insert the
%actual citation}) proposed a way to assign confidence to different
Given a collection of models, it may be impossible to combine all of
them, but possible to combine a variety of different subsets of them.
ACH proposed a way to assign confidence to different
possible combined models based on the decision-maker's confidence in
the original models.  Here we provide a way to extend this to our
setting. 

We start with a collection of pairs $(M_1,p_1),\dots,(M_n,p_n)$ where
$M_i$ is a causal model with focus and $p_i$ is a value in $(0,1]$.
Here the intuition for each pair should be that $M_i$ was the model
%joe2
%proposed by expert $i$ and $p_i$ is the degree of confidence that
%expert $i$ performed their experiments correctly; not that the
%assumptions built into their model are correct, only that for any
proposed by expert $i$ and $p_i$ is the decision-maker's degree of
confidence that 
expert $i$'s model is correct. More precisely, $p_i$ is not the
decision-maker's degree of confidence that the 
%meir5
%assumptions built into $M_i$ are correct, but his confidence that, for each
assumptions built into $M_i$ are correct, but her confidence that, for each
%joe2: shouldn't this be \G_i(C), given our intuitions
%variable $C$ and intervention $Par_i(C) = \vec{x}$, if
%$(M_i,\vec{u})\vDash [Par_i(C) = \vec{x}] C=c$ then expert $i$ was
%indeed able to induce a state of the true world wherein the values of
%the variables in $Par_i(C)$ were the corresponding values in $\vec{x}$
%and $C$ did have value $c$.  Following Alrajeh, Chockler, and Halpern
%(2018) (\textbf{remember to insert the actual citation}) we define
variable $C$ and intervention $\G_i(C) = \vec{x}$, if
$(M_i,\vec{u})\vDash [\G_i(C) = \vec{x}]( C=c)$ then expert $i$ 
indeed observed a world where the variables in $\G_i(C)$ were
$\vec{x}$ and $C$ did have value $c$.  Following ACH, we define
%joe2
%$Compat = \{I \subseteq \{1,\dots,n\} | \textrm{ the models in  \{M_i|
$Compat = \{I \subseteq \{1,\dots,n\} : \textrm{ the models in } \{M_i:
 i\in{I}\} \textrm{ are mutually compatible} \}$ and define $M_I =
%joe2*: did yuo really mean \oplus', as opposed to \oplus?
%{\oplus'}_{i\in{I}} M_i$ for all $I \in compat$.  Mutual
%compatibility
%joe3
%{\oplus}_{i\in{I}} M_i$ for all $I \in Compat$.  Mutual compatibility
{\oplus}_{i\in{I}} M_i$ for all $I \in Compat$.  The \emph{mutual
compatibility} 
of a set $\mathcal{M}$ of models is defined inductively on the
cardinality of $\mathcal{M}$.  If $|\mathcal{M}|=1$ then $\mathcal{M}$
%joe3
%is automatically mutually compatible and if $|\mathcal{M}|=2$ then
is automatically mutually compatible, and if $|\mathcal{M}|=2$ then
$\mathcal{M}$ is mutually compatible if the two models in
$\mathcal{M}$ are compatible.  If $|\mathcal{M}|=n$ then $\mathcal{M}$
is mutually compatible if every subset of cardinality $n-1$ is
mutually compatible and, for each $M \in \mathcal{M}$, $M$ is
%meir5
%compatible with $\bigoplus (\mathcal{M}\backslash\{M\})$.  
compatible with $\oplus_{M' \in \mathcal{M} : M' \neq M} M'$.  

One simple way to weight the combined models, proposed
%in Alrajeh, Chockler, and Halpern (2018) (\textbf{remember to insert
%the actual citation}), is to assign model $M_I$ probability $P_I
by ACH, is to assign model $M_I$ probability
%joe2
\begin{equation}\label{eq:combine}
%joe3: change P_I to p_I globally (both because I see no reason to use
%both p_ and P_, and because it's consistent with ACH
%P_I = \displaystyle\prod_{i\in{I}}p_i
p_I = \displaystyle\prod_{i\in{I}}p_i
%joe2
%* \displaystyle\prod_{j\notin{I}}(1-p_j) / N$, where $N$ is simply a
* \displaystyle\prod_{j\notin{I}}(1-p_j) / N,
\end{equation}
where $N$ is simply a
normalization term to get the probabilities to sum to $1$.
%joe3
%This captures the notion that the agents in $I$ performed their experiments
Thus, $p_I$ captures the intuition that the agents in $I$ performed
their experiments 
correctly while the agents not in $I$ may have made a mistake in one
or more of their experiments, where the probabilities of agents having
made a mistake are treated as being mutually independent. 

%joe3
%In our setting, though, we may want to take into account how complex
Let $\M_I = \{M_i: i \in I\}$. 
In our setting, we may also want to take into account how complex
%joe2
%it is to combine a set of models when assigning the combined model a
%probability; if combining a set of models necessitates a large set of
%contexts to make all of the necessarily explanations we may have less
%joe3: M_I is a single model; \M_I is a set
%it is to combine the models in $M_I$ when assigning $M_I$ a
%probability; if combining the models in $M_I$ requires a large set of
it is to combine the models in $\M_I$ when assigning $M_I$ a
probability; if combining the models in $\M_I$ requires a large set of
contexts to make all of the necessarily explanations, then we may have less
confidence that the combined model captures the true state of the
world.  To formalize this idea, we first generalize 
%joe2
%the above-defined
%notion of complexity of combination to a set $\mathcal{M}$ of mutually
%compatible models: the complexity of combining $\mathcal{M}$ is the
%minimum cardinality of any set $\mathbb{U}$ such that all explanations
%made during the combination process can be made via $\mathbb{U}$. 
Definition~\ref{dfn:complexity} in the obvious way:
%joe3
%the complexity of combining the models in $\mathcal{M}$ is the
%meir5
%the complexity of combining the models in a set $\mathcal{M}$ is the
the combination complexity of a set $\mathcal{M}$ is the
minimum cardinality $|\U'|$ of a set $\U'$ such that all explanations
made during the combination process can be made using $\U'$.
%meir4: adding
%joe5
%The complexity of combination of any single model is defined to be $1$.
%meir5
%The complexity of combination of a singleton set is defined to be $1$.
The combination complexity of a singleton set is defined to be $1$.

%joe2
%The exact way complexity ought to be taken into account when assigning
%confidence scores may be context dependent and is up to the
%decision-maker combining the models to decide.  One simple rule that
Exactly how complexity should be taken into account when assigning
confidence scores may be context-dependent; it is up to the
decision-maker who is combining the models to decide.  We propose
several simple rules here. One simple rule that
%joe2
%may be relevant in some situations is to simply threshold and assign
may be relevant in some situations is to simply use a threshold, and assign
%joe3
%confidence $0$ to any models where the complexity of combination or
%meir5
%confidence $0$ to models where the complexity of combination or
confidence $0$ to models where the combination complexity or
%joe2: for consistency of notation
%complexity of any explanations are above some constant $k$.  (Here and
%joe3: let's use \mu consistently for complexity of explanation (or
%some other letter, as long as it's consistent)
%complexity of any explanations are above some constant $\nu$.  (Here and
%meir5
%complexity of any explanations are above some constant $\mu$.  (Here and
%joe6
%any explanation complexity are above some constant $\mu$.  (Here and
the explanation complexity with respect to any variable $C$ is above
some constant $\mu$.  (Here and 
in the following two rules, the normalization factor $N$ must be
updated accordingly.)  Another natural option may be to add a
%joe2
%weighting factor to the above equation that is inversely proportional
%to the complexity of combination.  So let $\mathcal{M}_I = \{M_i |
weighting factor to (\ref{eq:combine}) that is inversely proportional
%meir5
%to the complexity of combination.
to the combination complexity.
%joe3: said it above
%So let $\mathcal{M}_I = \{M_i : i\in{I}\}$ for all $I \in compat$.
%meir5
%If the complexity of combination
If the combination complexity
%joe2:
%of $\mathcal{M}_I = \mu_I$ then we would assign $P_I
%meir2
%of $\mathcal{M}_I$ is $\nu_I$, then we set $$P_I
%joe3: perhaps use a different symbol for each rule
%of $\mathcal{M}_I$ is $\mu_I$, then we set $$p_I
of $\mathcal{M}_I$ is $\mu_I$, then we set $$p_I'
= \frac{1}{\mu_I}*\displaystyle\prod_{i\in{I}}p_i
* \displaystyle\prod_{j\notin{I}}(1-p_j) / N.$$ A third rule that may
be useful in some contexts is to assign complexity weights that are
%meir5
%inverse exponential in the complexity of combination.  Here the
inverse exponential in the combination complexity.  Here the
confidence scores assigned would be
%joe2: displayed
%joe3
%$$p_I =
$$p_I'' =
e^{-\mu_I}*\displaystyle\prod_{i\in{I}}p_i
* \displaystyle\prod_{j\notin{I}}(1-p_j) / N.$$

%joe2
%For example, consider three models $M_1$, $M_2$, and $M_3$.  Let
\begin{example}
{\rm
Consider three models $M_1$, $M_2$, and $M_3$, where  
\begin{itemize}
%joe4: simplified; it may be better to switch M_2 and M_3, and M_1 and
%M_3 are so similar
%\item $\mathcal{U}_1 = \{A,B,D\}$, $\mathcal{V}_1 = \{C\}$, %$\mathcal{U}_2
\item $\mathcal{U}_1 = \mathcal{U}_3 = \{A,B,D\}$, $\mathcal{V}_1
= \mathcal{V}_3 = \{C\}$, $\mathcal{U}_2 
%joe4
%= \{A,G\}$, $\mathcal{V}_2 = \{B,C\}$, $\mathcal{U}_3 = \{A,B,D\}$,
%%joe2
%%and $\mathcal{V}_3 = \{C\}$.  Let the ranges of these variables be
%and $\mathcal{V}_3 = \{C\}$;  
= \{A,G\}$, and $\mathcal{V}_2 = \{B,C\}$;
\item $\mathcal{R}_{1}(C) = \mathcal{R}_{2}(C) = \mathcal{R}_{3}(C)
= \mathcal{R}_{1}(D) = \mathcal{R}_{3}(D) = \{0,1,2\}$ and
$\mathcal{R}_{1}(A) = \mathcal{R}_{1}(B) = \mathcal{R}_{2}(A)
= \mathcal{R}_{2}(B) = \mathcal{R}_{2}(G) = \mathcal{R}_{3}(A)
%joe2: defined \F and \G separately
%= \mathcal{R}_{3}(B) = \{0,1\}$.  For structural equations and focus
%= \mathcal{R}_{3}(B) = \{0,1\}$.  
%$\mathcal{F}_1(C) = D$, $\mathcal{G}_1(C) = \{A,B,D\}$,
%$\mathcal{F}_2(C) = A+B$, $\mathcal{F}_2(B) = G$, $\mathcal{G}_2(C)
%= \{A,B\}$, $\mathcal{G}_2(B) = \{G\}$, $\mathcal{F}_3(C)$ defined by
%$2$ if $D=0$ or else $\min(1,A+B)$ if $D=1$ or $D=2$,
%$\mathcal{G}_3(C) = \{A,B,D\}$, and $\mathcal{G}_{i}(\cdot)$ empty for
%all other variables.
= \mathcal{R}_{3}(B) = \{0,1\}$;
\item $\mathcal{G}_1(C) = \{A,B,D\}$, $\mathcal{G}_2(C)
%joe3
%= \{A,B\}$ $\mathcal{G}_2(B) = \{G\}$, and $\mathcal{G}_3(C) = \{A,B,D\}$;
= \{A,B\}$, $\mathcal{G}_2(B) = \{G\}$, and $\mathcal{G}_3(C) = \{A,B,D\}$;
\item the structural equations are such that, in $M_1$, $C=D$; in
%joe3
%$M_2$, $C = A+B$ and $B=G$; and in $M_3$ $C=2$ if
$M_2$, $C = A+B$ and $B=G$; and in $M_3$, $C=2$ if 
%joe3
%$D=0$ $C = \min(1,A+B)$ if $D=1$ or $D=2$.
$D=0$ and $C = \min(1,A+B)$ if $D=1$ or $D=2$.
\end{itemize}
%joe2:
%The models in $Compat$ are then $M_1, M_2, M_3,
%M_1 \oplus M_2, \textrm{ and } M_3 \oplus M_2$,
The models in the set $\{M_I: I \in Compat\}$ are $M_1$, $M_2$, $M_3$,
$M_1 \oplus M_2$, and  $M_3 \oplus M_2$,
%meir5
%with complexity of
with combination
%meir3*: wait, I think there's somethin wrong here.  I think 
%these should both be explaining M_1/M_3 with respect to $B$,
%not $G$, and the complexity for that part should be two because
%we need both exogenous values of $G$.  In which case I'd also have
%to recalculate all of the below.  Can you verify that that is indeed
%the case before I do so?
%joe4*: I agree it should be B, not G, and the complexity to explain B
%is 2, not 1.  So the total complexity for M_1 \oplus M_2 should be 5.
%It should also be 5 for M_3 \oplus M_2 (3 + 2), right?
%meir5
%combination $4$ for $M_1 \oplus M_2$ ($3$ for $M_1$ to explain $M_2$
%meir5
%complexity $4$ for $M_1 \oplus M_2$ ($3$ for $M_1$ to explain $M_2$
%with respect to $C$ and $1$ for $M_2$ to explain $M_1$ with respect to
complexity $5$ for $M_1 \oplus M_2$ ($3$ for $M_1$ to explain $M_2$
with respect to $C$ and $2$ for $M_2$ to explain $M_1$ with respect to
%meir5
%$G$) and complexity of combination $3$ for $M_3 \oplus M_2$ ($2$ for
%meir5
%$G$) and combination complexity $3$ for $M_3 \oplus M_2$ ($2$ for
%$M_3$ to explain $M_2$ with respect to $C$ and $1$ for $M_2$ to
$B$) and combination complexity $4$ for $M_3 \oplus M_2$ ($2$ for
$M_3$ to explain $M_2$ with respect to $C$ and $2$ for $M_2$ to
%joe2
%explain $M_3$ with respect to $G$).  Let us consider the second
%meir5
%explain $M_3$ with respect to $G$).
explain $M_3$ with respect to $B$).
%joe3
%joe4*: what does it mean that a model has a complexity of explanation
%1.  The complexity of explanation is of one model relative to another.
%It would make more sense if you wrote complexity of combination and
%took the complexity of combination of a single model to be 1, by
%definition.  
%meir4: yes, that should be combination like the others
%Of course, $M_1$, $M_2$, and $M_3$ all have complexity of explanation 1.
%joe5
%Of course, $M_1$, $M_2$, and $M_3$ all have complexity of combination 1.
Of course, $M_1$, $M_2$, and $M_3$ (viewed as singleton sets) all have
%meir5
%complexity of combination 1, by definition. 
combination complexity 1, by definition. 
Consider the second
weighting rule above, inversely proportional weighting, with prior
confidences $p_1=0.85, p_2 = 0.8, \textrm{ and } p_3 = 0.9$.  The
assigned confidence scores would then be 
%joe2: it doesn't look so bad to me, but I did a bit of aligning
%\noindent\textbf{These should really probably be aligned, but for
%some reason I can't get amsmath to work...}\\ 
$$\begin{array}{ll}
%joe3: adding primes
%meir5: fixing
\commentout{
p_{M_1}' = (0.85)(0.2)(0.1) / N \approx 0.156 \\
p_{M_2}' = (0.15)(0.8)(0.1) / N \approx 0.110 \\
p_{M_3}' = (0.15)(0.2)(0.9) / N \approx 0.248 \\
p_{M_1 \oplus M_2}' = (\frac{1}{4})(0.85)(0.8)(0.1) / N \approx 0.156 \\
p_{M_3 \oplus M_2}' = (\frac{1}{3})(0.15)(0.8)(0.9) / N \approx
0.330.}

p_{M_1}' = (0.85)(0.2)(0.1) / N \approx 0.176 \\
p_{M_2}' = (0.15)(0.8)(0.1) / N \approx 0.124 \\
p_{M_3}' = (0.15)(0.2)(0.9) / N \approx 0.280 \\
p_{M_1 \oplus M_2}' = (\frac{1}{5})(0.85)(0.8)(0.1) / N \approx 0.141 \\
p_{M_3 \oplus M_2}' = (\frac{1}{4})(0.15)(0.8)(0.9) / N \approx
0.280.
\end{array}$$

%meir2:adding another rule
%joe4: ``inverse exponetial weighting'' sounds better to my ear.
%Not a big deal though.
%meir4: agreed
%Under the third rule, inversely exponential weighting, with the same
Under the third rule, inverse exponential weighting, with the same
prior confidences, the assigned confidence scores would be 
$$\begin{array}{ll}
%joe3: adding ''
%meir5: fixing
\commentout{
p_{M_1}'' = (0.85)(0.2)(0.1) / N \approx 0.271 \\
p_{M_2}'' = (0.15)(0.8)(0.1) / N \approx 0.192 \\
p_{M_3}'' = (0.15)(0.2)(0.9) / N \approx 0.431 \\
p_{M_1 \oplus M_2}'' = (e^{-4})(0.85)(0.8)(0.1) / N \approx 0.020 \\
p_{M_3 \oplus M_2}'' = (e^{-3})(0.15)(0.8)(0.9) / N \approx
0.086.}

p_{M_1}'' = (0.85)(0.2)(0.1) / N \approx 0.291 \\
p_{M_2}'' = (0.15)(0.8)(0.1) / N \approx 0.205 \\
p_{M_3}'' = (0.15)(0.2)(0.9) / N \approx 0.462 \\
p_{M_1 \oplus M_2}'' = (e^{-5})(0.85)(0.8)(0.1) / N \approx 0.008 \\
p_{M_3 \oplus M_2}'' = (e^{-4})(0.15)(0.8)(0.9) / N \approx
0.034.
\end{array}$$

As expected, the inverse exponential weighting rule is more complexity
%joe3
%averse and so assigns a greater proportion of confidence to the
averse, and so assigns a greater proportion of confidence to the
uncombined models. 
%joe3: removed paragraph break, so the \wbox goes in the right place.
%
%joe2*: you might want to add one more rle here, so readers can
%compare the numbers.
}
\wbox
\end{example}

%meir5: added the first sentence, per discussion.
These three rules behave in a qualitatively similar manner, with the
importance of complexity being taken into account in different ways. 
%meir5  
%More generally, let $\mu_I$ be the complexity of combination of
More generally, let $\mu_I$ be the combination complexity of
$\mathcal{M}_I$ and let $Q_I = \displaystyle\prod_{i\in{I}}p_i
* \displaystyle\prod_{j\notin{I}}(1-p_j)$.  We believe that there are
%joe2
%many valid functions $f(Q_I,\mu_I)$ that can be used to assign the
many reasonable functions $f(Q_I,\mu_I)$ that can be used to assign 
%joe3
%the confidence scores $p_I = \frac{f(Q_I,\mu_I)}{N}$, where $N
%= \displaystyle\sum_{I\in{compat}} f(Q_I,\mu_I)$; we leave it up
a confidence scores to $M_I$; we leave it up
to the decision-maker to decide what function $f$ is most suitable for
a given context.
%joe2
%The two rules we impose for a function $f$ to be
%valid are that it be non-increasing in $\mu_I$ and non-decreasing in
%$Q_I$; that is to say, $f(Q_I,\mu_I) \geq f(Q_I,\mu_I+1)$ for fixed
The two requirements that seem necessary to us is that $f$
be non-increasing in $\mu_I$ and non-decreasing in
%joe3
%$Q_I$; that is, $f(Q_I,\mu_I) \geq f(Q_I,\mu_I+1)$ for fixed
%$Q_I$ and $f(Q_I,\mu_I) \leq f(Q_I + \epsilon,\mu_I)$ for
%$\epsilon \geq 0$ and fixed $\mu_I$.
$Q_I$; that is, $f(Q_I,\mu_I) \geq f(Q_I,\mu_I')$ for fixed
$Q_I$ if $\mu_I' \ge \mu_I$, and $f(Q_I,\mu_I) \leq f(Q_I',\mu_I)$ for
fixed $\mu_I$ if $Q_I' \ge Q_I$.
%joe3: removed paragraph break
%
These two rules capture the
%joe2
%notions that we should not prefer models because they are more
%complicated nor should we prefer them because they were composed of
intuition that we should not prefer models that are more
complicated, nor should we prefer models that are composed of
models in which we had less prior confidence. 

An additional factor that may sometimes play a role is the likelihood
of different endogenous settings occurring.  If one model can
%joe2
%only explain the other if the true exogenous setting was one considered
%very unlikely to occur then we may not want to assign much weight to
%that combined model.  Thus in certain settings it may also make sense
explain the other only by using a context that is 
very unlikely to occur, then we may not want to assign much weight to
that combined model.  Thus, in certain settings it may also make sense
to have the confidence scores depend on a distribution over exogenous
settings. 

%joe2: Moved here
%\subsection{Computational Complexity}
%joe3
%\section{Computational Complexity}
\section{Computational Complexity}\label{sec:complexity}

%joe2
%Now that we have formally defined our can explain relation, a brief
%interlude is in order to consider the computational complexity of
%determining whether one model can explain another with respect to
%$C$.
We now consider the computational complexity of
determining whether one model can explain another with respect to
$C$.

%joe2
%\begin{claim}
\begin{theorem}
%joe3*: what about an upper bound?
%Determining whether $M_1 \succeq_C M_2$ is $\Pi^{P}_{2}$-hard, even in
%joe4
%Determining whether $M_1 \succeq_C M_2$ is $\Pi^{P}_{2}$-complete, even in
%meir4: What was the reason for separating this out into both parts?
%joe5: the ``in \Pi^P_2$ is the upper bound, the ``Pi^P_2'' is the
%lower bound.  I want to say that the lower bound holds even for
%binary models
%meir5: Oh.  Don't they both hold even when all variables
% are binary?  (Thought admittedly kind of trivial for the upper bound.)
%joe6: the point is that it's a stronger statement to say that the
%lower bound holds for binary variables, while it's a weaker statement
%to say that the upper bound holds for binary variables (which of
%course it does if it holds for arbitrary variables)
% Also, when I read this claim I read the "even"... as applying to both.  I 
% think maybe without the second comma I'd have read it as just applying 
% to the lower bound.
Determining whether $M_1 \succeq_C M_2$ is in $\Pi^{P}_{2}$, and is
$\Pi^{P}_2$-hard, even in
%joe2
%instances where all variables have boolean range.
instances where all variables are binary.
\end{theorem}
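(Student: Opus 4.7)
The plan is to bound the complexity from above and below separately.

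For the $\Pi^P_2$ upper bound, I read off the quantifier structure of Definition~\ref{dfn:canexplain} directly. Conditions (a) and (b) are polynomial-time checks. For condition (c), since $c$ is uniquely determined by $\vec{u}_2$ and $\vec{x}$ in an acyclic model, the condition is equivalent to: for all contexts $\vec{u}_2$ and all interventions $\G_2(C) \gets \vec{x}$, there exists a context $\vec{u}_1$ such that the unique value of $C$ in $M_1$ under $\vec{u}_1$ and intervention $\vec{x}$ equals the unique value of $C$ in $M_2$ under $\vec{u}_2$ and intervention $\vec{x}$. The innermost predicate is polynomial-time (topological evaluation of the acyclic structural equations under the given intervention), yielding a $\forall\exists$ statement over polynomial-size objects, which is in $\Pi^P_2$.

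For $\Pi^P_2$-hardness, I reduce from $\forall\exists$-SAT. Given an instance $\forall \vec{x}\,\exists \vec{y}\,\phi(\vec{x},\vec{y})$ with $\vec{x}=(x_1,\ldots,x_n)$ and $\vec{y}=(y_1,\ldots,y_m)$, construct two binary causal models sharing a variable $C$ as follows. In $M_2$, take $\U_2 = \{X_1,\ldots,X_n\}$, $\V_2 = \{C\}$, $\F^{M_2}(C) \equiv 1$, and $\G_2(C) = \U_2$. In $M_1$, take $\U_1 = \{X_1,\ldots,X_n\} \cup \{Y_1,\ldots,Y_m\}$, $\V_1 = \{C\}$, $\G_1(C) = \U_1$, and let $\F^{M_1}(C)$ compute $\phi(\vec{X},\vec{Y})$, represented directly as a polynomial-size Boolean circuit. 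All variables are binary, both models are trivially acyclic, conditions (a) and (b) of Definition~\ref{dfn:canexplain} hold by construction, and the construction is polynomial-time.

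To see correctness, note that in $M_2$ the value of $C$ is always $1$, so the antecedent of the implication in condition (c) forces $c=1$. Condition (c) then reduces to: for every intervention $\vec{X} \gets \vec{x}$, there exists a context $\vec{u}_1 = (\vec{x}',\vec{y})$ in $M_1$ with $C=1$. But the intervention overrides $\vec{x}'$, and $C$ evaluates to $\phi(\vec{x},\vec{y})$ under that intervention, so the requirement is exactly $\forall \vec{x}\,\exists \vec{y}\,\phi(\vec{x},\vec{y})$. Since $\forall\exists$-SAT is $\Pi^P_2$-complete, this establishes hardness even with binary variables. The main subtle point is ensuring that the implication in condition (c) is nontrivial for every quantified $\vec{x}$; taking $M_2$ to be the constant-$1$ model accomplishes this, preventing the implication from being vacuously satisfied for some $\vec{u}_2$ and pushing all the Boolean content of the reduction into $\F^{M_1}(C)$.
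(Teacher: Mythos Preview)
Your proposal is correct and follows essentially the same approach as the paper: the $\Pi^P_2$ upper bound comes directly from the $\forall\exists$ quantifier structure of condition~(c) over polynomial-time-evaluable predicates, and the hardness reduction from $\forall\vec{X}\,\exists\vec{Y}\,\varphi$ uses the same construction---$M_2$ with $\U_2=\vec{X}$, $\G_2(C)=\vec{X}$, and $C\equiv\mathsf{true}$, and $M_1$ with $\U_1=\vec{X}\cup\vec{Y}$, $\G_1(C)=\vec{X}\cup\vec{Y}$, and $C=\varphi$. Your explicit remark that the intervention overrides the $\vec{X}$-component of the $M_1$ context, leaving only the $\vec{Y}$-component as the existential witness, matches the paper's correctness argument exactly.
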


\begin{proof}
%joe3*
%meir3*:  I'm not sure I follow this.  Can we discuss?
%joe4: sure.  But briefly, it is clear that, given M_1 and M_2, it's
%trivial to check clauses (a) and (b) of Definition 3.1.  Clause (c)
%has the form \forall u_2 and all interventions there exists a context
%u_1 such that ..., where the ... is something that can be checked in
%p-time (given u_2, u_1, and the intervention).  That's exactly what
%it takes for a problem to be in \Pi_2^P: it has the
%form \forall \exist ..., where the ... is in p-time.x
%meir4: Oh, got it.  So this basically just follows from being able to
%evaluate in polytime what value C takes in a model given an exogenous
%setting and an intervention.
%joe5: Yes, or more precisely, that + the fact that we can write the problem
%as \forall \exists (evaluate the value that C has)
It is easy to see that the problem is in $\Pi^P_2$: the first
two conditions in the can-explain relation can clearly be checked in
polynomial time, while, for a fixed intervention $\G_2(C) = \vec{x}$
in $M_2$, context $\vec{u}_2$ in $M_2$, and context $\vec{u}_1$ in
$M_1$, checking whether  $(M_2, \vec{u}_2) \vDash
[\mathcal{G}_2(C) \leftarrow \vec{x}]( C=c)$ and
$(M_1, \vec{u}_1) \vDash [\mathcal{G}_2(C) \leftarrow \vec{x}] (C=c)$
can be done in polynomial time.

%joe3
%Consider the canonical $\Pi^{P}_{2}$-hard language
%meir5: to fix rendering
\commentout{
For the lower bound,
consider the canonical $\Pi^{P}_{2}$-hard language
	$\Pi^{P}_{2}(\textrm{SAT})
	= \{\forall\vec{X}\exists\vec{Y}\varphi
%joe2
%|\forall\vec{X}\exists\vec{Y}\varphi \textrm{ is a closed
: \forall\vec{X}\exists\vec{Y}\varphi \textrm{ is a closed
	quantified}$ $\textrm{ Boolean formula,
	} \forall\vec{X}\exists\vec{Y}\varphi = \textbf{true} \}$. We
	show a reduction from $\Pi^{P}_{2}(\textrm{SAT})$ to our
	language.
			}

For the lower bound,
consider the canonical $\Pi^{P}_{2}$-hard language
$\Pi^{P}_{2}(\textrm{SAT}) $$
= $$ \{\forall\vec{X}\exists\vec{Y}\varphi
%joe2
%|\forall\vec{X}\exists\vec{Y}\varphi \textrm{ is a closed
$$ : $$ \forall\vec{X}\exists\vec{Y}\varphi \textrm{ is a closed
	quantified}$ $\textrm{ Boolean formula,
} \forall\vec{X}\exists\vec{Y}\varphi = \textbf{true} \}$. We
show a reduction from $\Pi^{P}_{2}(\textrm{SAT})$ to our
language. 

%joe2: be gentle
%	Consider a CQBF $\forall\vec{X}\exists\vec{Y}\varphi$; we show
	Consider a CQBF (closed quantified Boolean formula)
        $\forall\vec{X}\exists\vec{Y}\varphi$; we show 
how to transform this into an instance of our problem. For
%joe2
%ease of exposition we assume that all variables in
ease of exposition, we assume that all variables in
	$\vec{X}\cup\vec{Y}$ appear in $\varphi$. Let $M_2$ contain
	exogenous variables $\vec{X}$ and an endogenous variable
	$C\notin \vec{X}\cup\vec{Y}$.  
%joe2
%The range of all variables in
% $M_2$ is the Boolean values and $\mathcal{F}_2(C)
%= \mathbf{true}$. Let $\mathcal{G}_2(C) = \vec{X}$.
%meir3
%In $M_2$, the range of all variables in
In $M_2$, the range of all variables
is $ \{\mathsf{true},\mathsf{false}\}$,
$\mathcal{G}_2(C) = \vec{X}$, and the equation for 
$C$ is  $C = \mathsf{true}$.  In $M_1$, 
%joe2
%we let the exogenous variables be $\vec{X}\cup\vec{Y}$, $C$ be
%	the only endogenous variable, and the range of all of them be
%	the Booleans.  We set $\mathcal{F}_1(C) = \varphi$
%	(technically a function mapping variable values to the
%	corresponding evaluations of $\varphi$ under those values) and
%	let $\mathcal{G}_1(C)$ be the variables in $\varphi$.
we have $\U_1 = \vec{X} \cup \vec{Y}$, $\V_1 = \{C\}$, $\G_1(C)
= \vec{X} \cup \vec{Y}$, and the equation for $C$ is $C = \varphi$.
	
	We now show that $M_1 \succeq_C M_2$ if and only
%joe2
%if it is true 	that $\forall\vec{X}\exists\vec{Y}\varphi$.
%First let us 	assume that $M_1 \succeq_C M_2$.  Because $C$ is always
$\forall\vec{X}\exists\vec{Y}\varphi$ is true.
First, suppose that $M_1 \succeq_C M_2$.  Because $C$ is always
	$\mathsf{true}$ in $M_2$ and $\mathcal{G}_2(C) = \vec{X}$,
%joe2
%we know from part b of the definition of the can explain relation that
by condition (c) in the definition of the can-explain relation,
for all settings of $\vec{X}$ there must be a setting of
	the remaining variables such that $C = \mathsf{true}$ in
%joe2
%$M_1$.  But because $\mathcal{F}_1(C) = \varphi$ this is the
%this is the same as that for all settings of $\vec{X}$ there exists a
%	setting of $\vec{Y}$ such that $\varphi$ is true.  In the
%	other direction, assume it is true that
%	$\forall\vec{X}\exists\vec{Y}\varphi$.  Because
$M_1$.  But because the equation for $C$ in $M_1$ is $C = \varphi$,
this means that for all settings of $\vec{X}$, there exists a
	setting of $\vec{Y}$ such that $\varphi$ is true.  For the
	other direction, suppose that
        $\forall\vec{X}\exists\vec{Y}\varphi$ is true.
%joe2: too complicated
%Because 
%	$\mathcal{G}_2(C) = \vec{X}$, $\mathcal{G}_1(C)$ is the set of
%	variables in $\varphi$, and all variables in $\vec{X}$ appear
%	in $\varphi$, we have that part a of the definition of the can
%	explain relation holds.  For part b,
Clearly $\G_2(C) \subseteq \G_1(C)$ and $\R_1(C) = \R_2(C)$.  To see
that condition (c) of the definition of can-explain holds,
consider an intervention
	$\vec{X} = \vec{x}$ on $\vec{X}$.  Because
%joe2
%it is true that 	$\forall\vec{X}\exists\vec{Y}\varphi$
%        we know that there must 
$\forall\vec{X}\exists\vec{Y}\varphi$ is true,         there must 
	be some setting of the values in $\vec{Y}$ such that if
%joe2: counterfactual
%$\vec{X}$ was set to $\vec{x}$ then $C$ would evaluate to
%	$\mathsf{true}$ in $M_1$.  So under the context where
$\vec{X}$ were set to $\vec{x}$, then $C$ would evaluate to
	$\mathsf{true}$ in $M_1$.  So in the context where
	$\vec{Y}$ is set correspondingly, we get that the original
%joe2
%intervention would make $C = \mathsf{true}$ as desired.
intervention would make $C = \mathsf{true}$, as desired.  
\end{proof}

While this result indicates that this computation is likely to be
%joe2
%intractable in the worst case, models of relevance in the natural and
intractable in the worst case, models that arise in the physical and
social sciences often contain only a small number of variables, so we
would still expect these computations to be feasible in practice.

\section{Conclusion}
\label{sec:conclusion}

%joe3
%In this work we have shown how causal models can be combined in
We have shown how causal models can be combined in
instances where experts disagree due to different focus areas.  We
%joe2
%have defined what it means for one model to be able to explain another
defined what it means for one model to be able to explain another
with respect to a given variable and showed how this can be used to
%joe2
%combine two compatible models.  Furthermore, we have shown that the
combine two compatible models.  Furthermore, we showed that the
model obtained via this combination process is in fact the least upper
bound of the combined models relative to the natural relation, in some
sense making it the simplest model that can explain the observations
of both experts. 

%joe2
%In future work we believe it may be worth considering other relations
%joe3
%We believe that it may be worth considering other relations
%relative to which it may be worth computing the least upper bound of a
%%joe2
%%set of causal models.  While we argue that the can explain relation is
%set of causal models.
The can-explain relation embodies one way of explaining why two
experts may have different causal models.  ACH can be viewed as
modeling a different reason, 
%meir3: I'm not sure I understand what this part of the sentence is saying.
%joe4: is the following clearer
%corresponding to the agent just considering different variables.
where $M_1$ is ``better than'' $M_2$ with respect to a variable $C$ in
the ACH view if, roughly speaking, $M_1$ has a more detailed picture
of the causal relations among the ancestors of $C$.
%joe3* rewrote
%While we argue that the can-explain relation is
%of natural relevance, there may still be other relations that are
%relevant for other scenarios than those which we have considered.
While we believe that the can-explain relation captures quite a
natural intuition (as does the ACH notion of compatibility!), there may
well be other reasonable intuitions that are worth exploring.  More
generally, this viewpoint suggests that a decision-maker trying to
combine experts' models must think seriously about the reasons
underlying their disagreement before combining models, and consider a
notion of combination appropriate for these reasons.  Since the need
to combine expert opinions arises frequently in practice, having a
principled understanding of the process seems to us critical.
We hope that the results of this paper help in this process.

%joe3*: cut all this, since the situation is still somewhat confused.
\commentout{
There are also some technical

Furthermore, while we have shown that if two models are compatible
then their combination is the unique least upper bound if one exists,
it can still be the case that there is a least upper bound but the
models are declared incompatible under our definition, such as when
two models share a structural equation for $C$ but have different
focal sets.  It is worth classifying precisely when this occurs and
trying to determine whether there is a more general algorithm that
would be guaranteed to find any least upper bound of a set of models. 

Finally, it is worth carefully examining the ideas proposed by ACH in
light of the current work.  Does the notion of combination they
proposed precisely characterize the least upper bound of two models
relative to the relation they proposed?  \textbf{I'm not certain how
to write this.  Technically it's the natural extension of the relation
proposed to be over models...} If not, what are the characteristics of
the least upper bound in their model?  How, precisely, do least upper
bounds under their definitions relate to least upper bounds under the
definitions proposed here?  

We believe that this and future work on this topic will be valuable
for making formal causal reasoning a useful tool in the natural,
social, and medical sciences.
}

%meir6: adding acknowledgements
%joe7
%\section*{Acknowledgments}
\paragraph{Acknowledgments:}
This work was supported in part by NSF grants IIS-1703846 and IIS-1718108,
ARO grant W911NF-17-1-0592, and a grant from the Open Philanthropy project.

%References and End of Paper
\bibliographystyle{aaai}
\bibliography{joe,z,fv}
\end{document}